\newcolumntype{d}{D{.}{.}{-1}}
\newcolumntype{z}{b{2mm}D{.}{.}{-1}}
\newcolumntype{s}{D{/}{/}{-1}}
\newtheorem{observation}{Observation}
\definecolor{darkblue}{rgb}{0, 0, 0.5}
\bfseries\color{MediumBlue},
\rmfamily\color{Green},
\definecolor{Gray}{gray}{0.95}
\newcommand\pyline[1]{\lstinline[language=Python,basicstyle=\ttfamily]{#1}}
\newcommand{\X}{\mathcal{X}}
\newcommand{\Y}{\mathcal{Y}}
\newcommand{\F}{\mathcal{F}}
\newcommand{\NP}{\mathrm{NP}}
\newcommand{\poly}{\mathrm{poly}}
\title{Programming Puzzles}
\author{Tal Schuster\\MIT
\And Ashwin Kalyan\\Allen Inst.~for AI
\And Oleksandr Polozov\\Microsoft Research \And Adam Tauman Kalai\\Microsoft Research}
\begin{document}

\maketitle
\begin{abstract}
We introduce a new type of programming challenge called programming \textit{puzzles}, as an objective and comprehensive evaluation of program synthesis, and release an open-source dataset of Python Programming Puzzles (P3).\footnote{\url{https://github.com/microsoft/PythonProgrammingPuzzles}} Each puzzle is defined by a short Python program $f$, and the goal is to find an input which makes $f$ return \pyline{True}. The puzzles are objective in that each one is specified entirely by the source code of its verifier $f$, so evaluating $f$ is all that is needed to test a candidate solution.  They do not require an answer key or input/output examples, nor do they depend on natural language understanding. The dataset is comprehensive in that it spans problems of a range of difficulties and domains, ranging from trivial string manipulation problems, to classic programming puzzles (e.g., Tower of Hanoi), to interview/competitive-programming problems (e.g., dynamic programming), to longstanding open problems in algorithms and mathematics (e.g., factoring). We develop baseline enumerative program synthesis, GPT-3 and Codex solvers that are capable of solving puzzles---even without access to any reference solutions---by learning from their own past solutions. Codex performs best, solving up to 18\% of 397 test problems with a single try and 80\% of the problems with 1,000 tries per problem. In a small user study, we find a positive correlation between puzzle-solving performance and coding experience, and between the puzzle difficulty for humans and AI solvers. Therefore, further improvements on P3 could have a significant impact on many program synthesis areas.


\end{abstract}

\section{Introduction}

Puzzles are often used to teach and evaluate human programmers. Classic puzzles such as the Tower of Hanoi teach fundamental concepts such as recursion. Programming competition problems, also referred to as puzzles \cite{laaksonen2020guide}, evaluate a participant's ability to apply these concepts. Puzzles are also used to evaluate programmers in job interviews, and puzzles such as the RSA-factoring challenge test the limits of state-of-the-art algorithms. Each of these types of puzzles is described in its own format, often in a natural language such as English. Evaluations often include a hidden test set. 

We introduce a novel puzzle representation called a programming puzzle or simply a \textit{puzzle}, which captures the essence of these challenges in a form convenient for machines and programmers. At a minimum, a puzzle is specified by a function $f(y)$, and the goal is to find $y$ such that $f(y)=\mathtt{True}$. 
More generally, a puzzle can include input variables $x$. Then, the puzzle can be seen as an output \emph{verifier} $f(y, x)$ that validates $y$. The answer $y$ is typically the output of a synthesized program $g$. 
In order to find  $g(x) \rightarrow y$, a \textit{synthesizer} is given the source code of $f$ (and possibly $x$), with the goal of generating a program $g$ such that $f(g(x), x)=\mathtt{True}$. 
Importantly, puzzles make for an objective and explicit programming evaluation based solely on code with no formal requirement for input/output examples, natural language descriptions, or reference solutions.  


\begin{figure}[t]
 \vspace{-1.3\baselineskip}
\small
\begin{pyblock}
# Find a string that when reversed and concatenated with "world" gives "Hello world"
def f1(y: str):  
    return y[::-1] + "world" == "Hello world"

# Tower of Hanoi, often teaches recursion. Move [i, j] means move top disk on tower i to j, with 1 $\le$ i,j $\le$ 3
def f2(moves: List[List[int]], num_disks=8):
    state = [1] * num_disks  # All disks start at tower 1.               
    for [i, j] in moves:  
        assert state.index(i) <= (state + [1, 2, 3]).index(j), "bigger disk on top"
        state[state.index(i)] = j  # Move smallest disk from tower i to tower j.
    return state == [3] * num_disks  # All disks must end on tower 3.

# Find a non-trivial integer factor d of a large number n
def f3(d: int, n=100433627766186892221372630609062766858404681029709092356097): 
    return 1 < d < n and n 

\end{pyblock}
    \vspace{-1\baselineskip}
    \caption{Programming puzzles ranging from trivial to longstanding open algorithmic challenges in multiple domains. \pyline{f1} is solved by \pyline{y="Hello "[::-1]}, a recursive program (see Figure \ref{fig:hanoi} on page \pageref{fig:hanoi}) outputting 255 moves solves \pyline{f2}, and \pyline{f3} requires computational number theory algorithms.}
    \label{fig:examples}
    \vspace{-1\baselineskip}
\end{figure}

Puzzles may have multiple valid outputs $y$ and some puzzles, even if very short, are extremely challenging. Figure \ref{fig:examples} illustrates three puzzles that are diverse in domain, difficulty, and algorithmic tools. The first puzzle is an easy (for humans)
puzzle that tests one’s understanding of basic syntax and properties of strings. The second is  the quintessential example of recursion, and the last is a hard problem
requiring advanced algorithms such as the quadratic sieve.


%
%


We also release a growing open-source Python Programming Puzzles dataset, called P3, which is already comprehensive in terms of difficulty, domain, and algorithmic tools. This dataset unifies many of the types of puzzles mentioned above. While P3's puzzles are all specified in Python, solution programs $g$ can be written in any language, or simulated by neural networks.

As describe in \S\ref{sec:the_dataset}, P3 also contains numerous classic puzzles; optimization puzzles such as solving a linear programming; graph puzzles such as shortest path; and competitive-programming problems. The most difficult puzzles involve longstanding open problems such as learning parity with noise~\citep{blum2003}; factoring~\citep{theory_of_numbers_book,Kaliski2005}; or finding a cycle in the $3n+1$ process which would disprove the Collatz conjecture~\citep{collatzConjecture}. Thus, if AI were to surpass human-level performance on this dataset, it would lead to breakthroughs on major open problems. The P3 puzzles were inspired by sources such as Wikipedia, algorithms books, programming competitions, and other program synthesis datasets. This dataset is growing as an open-source project, and anyone can add a puzzle by simply writing a function \pyline{f}.

One motivation for programming puzzles is that improvements in solving puzzles may lead to performance gains at other tasks. 
Recently, neural Language Models (LMs) have advanced the state-of-the-art of AI systems performance in offering code completions and synthesizing source code in general-purpose programming languages, such as Python, based on English descriptions~\citep{chen2021evaluating, austin2021program}. While such systems represent a major advance over prior approaches,~\citet{chen2021evaluating} point out that they also reproduce elementary programming mistakes. Figure~\ref{fig:autopilot} illustrates how state-of-the-art GitHub Copilot~\citep{copilot} solves a complex problem, handles the ambiguity of English, and yet makes elementary errors. Performance gains in solving programming puzzles may result in fewer errors or solving more sophisticated algorithms problems in downstream tasks such as code completion.

Puzzles are \textit{objective}, meaning that it is easy to unambiguously evaluate whether one's own answer is valid without consulting an answer key.  This evaluation also allows bootstrapping, even on test puzzles without gold solutions. Given a set of puzzles $(f_i, x_i)$, one can attempt to solve them with solutions $g_i$, determine with certainty which solutions are correct, and use those to improve one's ability to solve the remaining puzzles~\citep{dechter2013bootstrap}. 
Inspired by success in playing games~\citep{tesauro1995temporal,alphaZero}, self-training has also proven useful in program synthesis~\citep[see, e.g.,][]{balog2016deepcoder, christakopoulou2018glass}. Other commonly used representations, including natural language descriptions or Programming by Example (PbE), have inherent ambiguity. See Appendix~\ref{ap:compare} for a comparison of a competition problem represented in English and as a puzzle. 

From a theoretical point of view, as we shall discuss, objectivity can be formalized as the complexity class $\NP$ of non-deterministic polynomial-time decision problems. Moreover, the puzzle decision problem is $\NP$-complete, meaning  puzzles can readily express any $\NP$ problem, including polynomial-time problems and other $\NP$-complete problems such as Boolean satisfiability.



We compare several enumerative random forest and Transformers-based top-down solvers, as well as GPT-3 and Codex LM solvers with different prompt types (e.g., zero/few-shot and with/without English descriptions). 
In our experiments, without access to any reference solutions, only utilizing self-training bootstrapping, our enumerative models solved up to 43\% more P3 problems than a naive brute force baseline. Our LM solvers were able to solve many of the puzzles, given enough tries. 


\begin{figure}
\centering
\includegraphics[width=\textwidth]{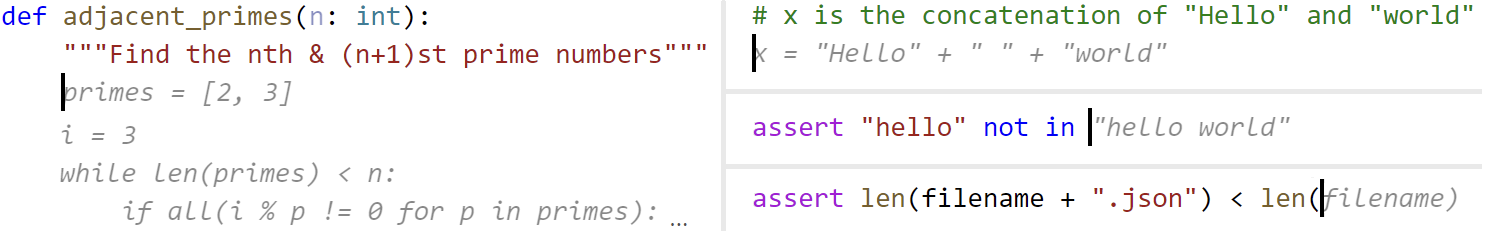}
\caption{GitHub Copilot code completion examples (in gray). Left: Copilot
correctly implements a seven-line function. Top right: 
the completion adds a space character that may or may not have been intended by the user. Middle and bottom right: errors indicating a lack of basic understanding.}
\label{fig:autopilot}
\end{figure}

To address the questions of whether puzzles measure programming proficiency and how puzzle difficulty compares between humans and computers, we performed a small user study. Puzzles were accessible and enjoyable for programmers with varying levels of experience. While both GPT-3 and enumerative techniques can solve a fraction of the puzzles, human programmers outperform them. 
For example, bootstrapping GPT-3 with up to 10K tries solved 60\% of the puzzles, lower than both beginner and experienced participants that solved 76\% and 87\% puzzles on average, respectively.
Overall, we find perceived puzzle difficulty to scale similarly for both humans and AI.
The main contributions of this paper are introducing:\vspace{-7pt}
\begin{enumerate}[noitemsep]
    \item programming puzzles: a new type of problem suitable for algorithmic problem-solving (for both machines and humans);\vspace{2pt}
    \item P3, an open-source dataset of puzzles covering diverse domains and difficulties; and\vspace{2pt}
    \item an evaluation of humans and baselines demonstrating that puzzles can be used to measure algorithmic problem-solving progress. 
\end{enumerate}\vspace{-4pt}

Progress in code completion is rapid---even between the time of submission of this paper and its publication, an API to Codex (a GPT-3 model fine-tuned for code completion) was released \citep{chen2021evaluating}. Our evaluation does in fact show significant improvements of Codex over other baselines.
\section{Problem formulation}

Programs, inputs and outputs can all be formally represented as strings, where $\Sigma^*$ is the set of finite strings over alphabet $\Sigma$. The set of verification functions is denoted by $\F \subseteq \Sigma^*$, with inputs and outputs $\X, \Y \subseteq \Sigma^*$, respectively. A puzzle is defined by pair  $(f, x) \in \F \times \X$ and the result of running verifier $f$ on output $y \in \Y$ is denoted $f(y, x) \in \{0,1\}$. Output $y \in \Y$ is \emph{valid} if it \emph{satisfies} $f(y, x)=1$, i.e., $f$ outputs $1$ when run on $(y, x)$, within a specified amount of time. To ensure that puzzles can be \textit{quickly} verified, it is necessary to upper-bound the time required for puzzle verification. This ensures that the puzzle decision problem, namely the problem of determining whether, given $f,x$, there is $y$ such that $f(y,x)=1$, is in the complexity class $\NP$. Formally, the puzzle decision problem is, given strings $f$ and $x$ denoting the puzzle (represented as, say, a Turing machine) and input, and a timeout $t$, does the puzzle output 1 in time $\leq t$. See Appendix \ref{sec:NP} for further details.

A \textbf{solver} takes $n$ puzzles and timeouts $(f_1, x_1, t_1), \ldots, (f_n, x_n, t_n)$, and produces outputs $y_i$ to as many puzzles as it can within a time bound $T$. Of course $T \gg \sum t_i$ is significantly larger than the verification timeouts. Formally, the  \textit{score}  of solver $S: \F^n \rightarrow \X^n$ is the number of puzzles $f_i$ for which $f_i(y_i, x_i)$ outputs 1 in time $\leq t_i$. Although we do not study it in this paper, it would be natural to assign different \textit{values} to different puzzles. For example, solving open problems such as finding a Collatz cycle or factoring the largest RSA challenge integer (currently unsolved, with a \$200,000 prize offered), should be of greater value than solving a simple hello-world puzzle.

It is convenient, though not required, to solve puzzles by outputting a program $g$ which, when run, computes output $y=g(x)$. Such a program is called a \textbf{solution} $g$. Short solutions may have long outputs, e.g., the puzzle \pyline{(f=lambda y: len(y) == x, x=1000000)} requires a string of length one million as solution \pyline{g=lambda x: 'a' * x}. In this example, $y=g(1000000)$ is a valid output of length one million. Of course, another solution would be to explicitly write a string of length one million in the code, though this implementation may not pass a human code review. In the dataset and this paper, we provide solutions since they may be significantly shorter. Many puzzles fit a single \textbf{problem} template, meaning they share the same verifier $f$ but have different inputs $x$. Thus a dataset may have many more puzzles than problems.
\section{The P3 dataset}\label{sec:the_dataset}
P3 uses Python, the de facto language of ML research, as the programming language for specifying puzzles. At the time of publication, P3 currently has 397 problems, summarized in Table \ref{tab:data_stats}.  The latest dataset can be generated by simply running \pyline{make_dataset.py} in the repository.
More puzzles may be created by increasing the number of puzzles per problem argument, though most experiments in this paper use only one puzzle per problem.
Every puzzle is described by a function with a required typed argument (i.e., the candidate output) that returns \pyline{True} upon success. Since Python is not type-safe, we add type-checking to ensure that outputs match the declared type. Figure \ref{fig:typecheck} on page \pageref{fig:typecheck} illustrates a puzzle where type checking is important.

We also provide code for serializing Python objects to and from strings in a json-like format, so that programs implemented in any language can produce outputs. Moreover, strings are universal in that they can encode arbitrary Python objects including functions, as in the \href{https://github.com/microsoft/PythonProgrammingPuzzles/blob/main/puzzles/README.md#quine}{Quine}
puzzle \pyline{(lambda quine: eval(quine) == quine)}\footnote{GPT-3 generated a 5-character solution to the quine puzzle while the authors' solution was 88 characters.} motivated by the classic challenge of writing a program that outputs its own source code. As evaluation of the string \pyline{quine} can lead to an infinite loop, this puzzle illustrates the necessity of the evaluation timeout $t$ for attempted solutions.

While not necessary for evaluation (since puzzles are self-contained)
we follow the common practice of programming competitions and
provide a reference solution to most (over 90\%) of the puzzles. Some puzzles have more than one solution. A handful of puzzles represent major open problems in computer science and mathematics including \href{https://github.com/microsoft/PythonProgrammingPuzzles/blob/main/puzzles/README.md#factoring}{Factoring} (and \href{https://github.com/microsoft/PythonProgrammingPuzzles/blob/main/puzzles/README.md#discretelog}{Discrete Log}), \href{https://github.com/microsoft/PythonProgrammingPuzzles/blob/main/puzzles/README.md#plantedclique}{Planted Clique}, \href{https://github.com/microsoft/PythonProgrammingPuzzles/blob/main/puzzles/README.md#learnparitywithnoise}{Learning Parity with Noise}, \href{https://github.com/microsoft/PythonProgrammingPuzzles/blob/main/puzzles/README.md#GraphIsomorphism}{Graph Isomorphism}, and finding a \href{https://github.com/microsoft/PythonProgrammingPuzzles/blob/main/puzzles/README.md#collatzcycleunsolved}{Collatz cycle},\footnote{The solution to this problem would disprove the Collatz conjecture that is believed to be true, but no proof has been found yet. Therefore, if the conjecture is true, the maximum attainable score in P3 is $<100\%$.} as described in Appendix \ref{ap:open}. We also provide English descriptions for each puzzle in the dataset to support research involving natural language.
Appendix \ref{ap:compare} compares programming competition problems to puzzles.

\paragraph{Creation process.}
The following sources were used for identifying possible puzzles:\vspace{-7pt}
\begin{itemize}[leftmargin=*, noitemsep]
    \item Wikipedia, specifically the  
    \href{https://en.wikipedia.org/wiki/Category:Logic_puzzles}{Logic puzzles} category, the 
    \href{https://en.wikipedia.org/wiki/List_of_unsolved_problems_in_mathematics}{List of unsolved problems in mathematics}, and the
    \href{https://en.wikipedia.org/wiki/List_of_algorithms}{List of algorithms}.\vspace{2pt}
    \item Competitions, primarily the competitive programming website \href{https://codeforces.com}{codeforces.com} but also a handful of problems from the \href{https://icpc.global}{International Collegiate Programming Contest} and the \href{https://en.wikipedia.org/wiki/International_Mathematical_Olympiad}{International Mathematical Olympiad} (IMO)--a high school mathematics competition.\vspace{2pt}
    \item Puzzles inspired by the HumanEval dataset used for evaluating Codex~\cite{chen2021evaluating}, added in v0.2.\vspace{2pt}
    \item The Python programming language itself, with trivial puzzles created to test understanding of basic functions, such as the the hello-world puzzle which tests string concatenation.
\end{itemize}

P3 is organized topically into modules listed in Table \ref{tab:data_stats}. These topics include domains such as  number theory, graph theory, chess puzzles, game theory, etc., as well as puzzles inspired by a specific source such as a specific programming competition. One finding in this paper is that many types of puzzles can be captured in spirit, if not exactly, as succinct puzzles. Common patterns include:\vspace{-7pt}
\begin{itemize}[leftmargin=*, noitemsep]
\item Problems that are \textit{naturally} puzzles. For instance, search problems such as the \href{https://github.com/microsoft/PythonProgrammingPuzzles/blob/main/puzzles/README.md#towersofhanoi}{TowerOfHanoi} (\pyline{f2}, Figure~\ref{fig:examples}) and \href{https://github.com/microsoft/PythonProgrammingPuzzles/blob/main/puzzles/README.md#slidingpuzzle}{SlidingPuzzle} simply test the sequence of moves to see if they lead to the goal state. \vspace{2pt}
\item Problems that have an equivalent natural puzzle. For instance, the standard definition of the factoring problem, namely factorizing an integer into its prime factors would require a puzzle that tests primality.  However the simpler problem of \href{https://github.com/microsoft/PythonProgrammingPuzzles/blob/main/puzzles/README.md#factoring}{finding any non-trivial integer factor}, \pyline{f3} in Figure \ref{fig:examples}, can be recursively called to solve the prime factorization problem. \vspace{2pt}
\item Optimization problems. Some such problems have equivalent natural puzzles, e.g., linear programming is well-known \cite{dantzig1951proof} to be equivalent to solving a zero-sum game which is the \href{https://github.com/microsoft/PythonProgrammingPuzzles/blob/main/puzzles/README.md#zerosum}{ZeroSum} puzzle. For others, such as \href{https://github.com/microsoft/PythonProgrammingPuzzles/blob/main/puzzles/README.md#longestmonotonicsubstring}{LongestMonotonicSubstring} or \href{https://github.com/microsoft/PythonProgrammingPuzzles/blob/main/puzzles/README.md#shortestpath}{ShortestPath}, we specify a bound $\theta$ on the objective, and the goal is to find a feasible $y$ with objective better than $\theta$. In order to generate $\theta$ (included in $x$), we first solve the optimization problem ourselves, but the puzzle generation code is not provided to the solvers.\vspace{2pt}
\item Problems that ask how many items in a certain set satisfy a given property, may be converted to problems that require an explicit enumeration of all such items. See for example the \href{https://github.com/microsoft/PythonProgrammingPuzzles/blob/main/puzzles/README.md#allpandigitalsquares}{AllPandigitalSquares} puzzle that requires all 174 pandigital perfect squares as input.\vspace{2pt}
\item Problems that involve game-playing can often be converted to puzzles. In chess, this includes the classic Eight Queens and Knights Tour search problems. A puzzles Mastermind involves exhibiting a winning strategy tree, and a nim puzzle involves beating a given computer opponent.\vspace{2pt}
\end{itemize}\vspace{-5pt}
In order to ensure that each puzzle is achieving its goals, the 
puzzle design process has a step which automatically tests for trivial solutions such as small integers or common strings. 

\paragraph{Exclusions.}
Many programming challenges do not make as good puzzles. First, simple \textit{translation} tasks, where goal is translate a sequence of steps described in natural language into a program, do not make good puzzles. Second, some challenges require problem-solving that is not easily expressed as a program. For example, computing the probability of rolling a total of 350 when rolling 100 dice relies on external knowledge about probability theory. Third, ``soft'' challenges involving natural language or images are not in NP and not easily verifiable. This includes challenges involving human commonsense or world knowledge about names, dates, or image classification. Finally, \textit{interactive} challenges do not make for good programming puzzles. Fortunately, several other benchmarks cover these latter two types of exclusions~\cite[see, e.g.,][]{andreas-etal-2020-task, talmor-etal-2019-commonsenseqa, kiela-etal-2021-dynabench, ILSVRC15, Lourie2021UNICORNOR, petroni-etal-2019-language, rajpurkar-etal-2016-squad, sap-etal-2019-social, schuster-etal-2021-get, wang-etal-2018-glue, zellers-etal-2018-swag, zellers2019vcr, alphaZero, alphago}.


\begin{table}[t]
    \centering
    \small
        \caption{Number of problems (and how many of them have at least one reference solution) per domain in P3 v0.2. The right two columns show the average size of puzzles and solutions, measured by the number of nodes in the Python AST.}
                \resizebox{0.65\columnwidth}{!}{%
    \begin{tabular}{l|dddd}
    \toprule
Domain & \multicolumn{1}{l}{Problems} & \multicolumn{1}{l}{Solutions} & \multicolumn{1}{l}{$|f|$} & \multicolumn{1}{l}{$|g|$}\\
\midrule
Algebra & 4 & 4  & 70& 172 \\
Basic & 23 & 23  & 54& 44 \\
Chess & 5 & 3  & 221& 186 \\
Classic puzzles & 23 & 23  & 101& 211 \\
Codeforces & 47 & 45  & 73& 70 \\
Compression & 2 & 2  & 126& 113 \\
Conways Game of Life & 3 & 2  & 189& 345 \\
Games & 7 & 7  & 225& 299 \\
Graphs & 12 & 11  & 105& 152 \\
HumanEval & 164 & 164  & 81& 62 \\
ICPC & 4 & 4  & 304& 569 \\
IMO & 6 & 6  & 173& 256 \\
Lattices & 2 & 2  & 70& 228 \\
Number Theory & 16 & 12  & 47& 68 \\
Probability & 5 & 5  & 85& 72 \\
Study & 30 & 30  & 40& 21 \\
Trivial inverse & 39 & 38  & 27& 30 \\
Tutorial & 5 & 5  & 27& 13 \\
\midrule
Total \# / Average size & 397 & 386  & 79 & 84 \\
\bottomrule
    \end{tabular}
    } %

    \label{tab:data_stats}
    \vspace{-1\baselineskip}
\end{table}

\paragraph{Growth process.}
The focus of this paper is in creating a framework with an initial dataset; and demonstrating its utility for developing and evaluating AI solvers.
As a GitHub repository, the dataset can grow over time in a standard manner with the ability to reference previous versions.
We plan to continue adding puzzles and hope that others will as well. Popular competitive-programming websites such as  \href{https://codeforces.com}{codeforces} may be a source of thousands of puzzles of varying difficulties. 

\section{Solvers}\label{sec:solvers}
In this section, we describe the models we develop as baselines for the dataset. 
We consider both solving problems independently and joint solvers that bootstrap from previously obtained solutions to find new ones.
We also consider both enumerative solvers that use standard techniques from program synthesis and LM solvers that use GPT-3 and Codex to solve puzzles. While a direct comparison between these two different approaches is difficult because they run on different hardware (the LMs call an API), we can still compare the relative difficulty with which they solve different puzzles, and also to human difficulty rankings among puzzles. 

\subsection{Enumerative solvers}
Following prior work~\citep{alon2020structural, menon2013machine,balog2016deepcoder, christakopoulou2018glass}, we develop models to guide the search for $g$ over the space of all possible functions. In particular, we implement a grammar that generates Abstract Syntax Trees (ASTs) for a large subset of Python. The grammar covers basic Python functionality and is described in Appendix \ref{sec:grammar_details}.
Specifically, each Python function is translated to an AST using a given set $\mathcal{R}$ of rules. Based on the puzzle, a context-specific distribution over rule probabilities is computed. To facilitate efficient top-down search, the context of a rule is defined to be the rule used by the parent node and the index of the current node among the parent's children. Thus if the parent node was a division binary operator, then the two children would each have different contexts, but if two such divisions were present in the same program, both numerators would share the same context. 

Each puzzle $f$ is represented by a feature vector $\phi(f)$ and each context is represented by a vector $c(p,i)$ where $p$ is the parent rule and $i$ is the child index. Each rule $r \in \mathcal{R}$ is also associated with a feature vector $\rho(r)$. 
The probability distribution over $\mathcal{R}$ is determined based on $\rho(r), \phi(f), c(p, i)$, and the likelihood of a solution $g$ is the product of all rules constituting its AST. Naturally, this scoring mechanism introduces a bias towards shorter programs (i.e., smaller trees), which is desirable as a short solution is easy to inspect. 

\paragraph{\pyline{COPY} rules.}
Solutions often reuse constants or puzzle parameters, for example the constant \pyline{25} or the variable \pyline{s} in example \pyline{f2} in Figure~\ref{fig:examples}. As in prior work \cite{menon2013machine}, for each puzzle, the global rules bank is expanded to include \pyline{COPY} rules for constants and parameters of the examined puzzle.\footnote{When executing a solution, \pyline{COPY} rules are simply the identity function (\pyline{COPY = lambda x: x} in Python).} When composing solutions, this rule can reduce the complexity of the solution by simply learning to copy part of the puzzle rather than having to generate it from scratch. For simplicity, we create copy rules for each of the supported types and assign the probabilities uniformly across all the puzzle's constants of that type. In other words, our models learn when a certain type should be copied from the puzzle, and rank all available constants and parameters of that type the same.  

To solve a new puzzle, we perform a top-down search. Specifically, at each node, we apply a selected model over all rules in $\mathcal{R}$ whose type matches the context, and re-normalize the scores to create a valid probability distribution. The solver enumerates solutions in order of decreasing likelihood until it finds a solution $g$ such that \pyline{f(g())} evaluates to \pyline{True} in time $\le t$, for a maximum number of tries $M$. See Appendix \ref{sec:td_details} for details on the search and rules. Next, we briefly describe our models.

\paragraph{Uniform.}
The first model is a simple uniform rule that assigns the same probability to all rules. The only exception is \pyline{COPY} rules, which have a larger, fixed probability in order to bias the solver towards utilizing this option. As we score programs by their joint probability, this bias effectively favors shorter programs. We use this model to find solutions to the easier problems, satisfied by a simple and short answer, and use these to bootstrap the learning of the parametric models. This model also provides a naive brute force baseline to compare the parametric models with.

The remaining two models have parameters that are fit based on \textit{bootstrapping}. Namely, given previously obtained solutions, we collect all parent-child rule pairs as self-supervision and fit the model's parameters on them. The training size is then the total number of nodes in all the trees among solutions discovered up until that point. We implement two bigram parametric models to predict $\mathbb{P}\bigl(r\mid\rho(r), \phi(f), c(p, i)\bigr)$, where $r$ is a candidate rule to appear in $g$'s tree under $p$ as its $i$'s argument.

\paragraph{Random forest.}
In this model, we represent $f$ as a bag-of-rules $\cup \{r_k \in f\}$. Specifically, $\phi(f)$ is a vector of length $|\mathcal{R}|$ representing the number of occurrences of each rule in $f$. $p$ and $i$ are encoded as a one-hot vector and concatenated to $f$'s representation to construct the input to the model. Given past solution trees, we train the model to predict the index of $r$ out of $|\mathcal{R}|$ given $f,p,i$ examples.

\paragraph{Transformer.} 
Following the recent success of transformer models~\citep{transformer2017,devlin-etal-2019-bert} in encoding source code~\cite[\emph{inter alia}]{feng-etal-2020-codebert,Svyatkovskiy_2020,cubert}, we turn to these encoders for richer representations. We use a RoBERTa-based~\citep{liu2019roberta} Transformer to encode puzzles and rules directly from their code. The probability of a rule $r$ being the $i$'s child of $p$ in $g$ is proportional to the dot product of the deep joint representation of $f,p,i$ and the Transformer encoding $\rho(r)$. We pretrain the Transformer with a masked language model task on Python GitHub repositories~\citep{husain2019codesearchnet}.\footnote{Our pretrained model and tokenizer are available at \url{https://huggingface.co/tals/roberta_python}.} Then, our solver concatenates the Transformer encodings $\phi(f)$ and $\rho(p)$ with a learned embedding for $i$, following by non-linear layers to compute the joint representation. We fine-tune the solver on parent-child rule pairs from previously acquired solutions. See Appendix~\ref{sec:transformer_details} for extended details, and Figure~\ref{fig:transformer} on page \pageref{fig:transformer} for a model diagram. 

\subsection{Autoregressive Language Model solvers}
We experiment with the transfomer-based GPT-3~\citep{Brown2020FewShot} and Codex~\citep{chen2021evaluating} LMs with billions of parameters. Codex was trained over large amounts of publicly available code from GitHub, specifically aimed for coding applications.
We follow the recent strategy of designing a prompt that directs the text generation to our desired task. This approach has shown to be useful in converting natural language descriptions to programming code and guide theorem proving~\citep{polu2020generative}. Unlike our enumerative models that build an AST, LMs generate the solution as a string that is directly evaluated as Python code.  

We consider four different prompts: (a) A \textit{short} zero-shot prompt based solely on the puzzle at hand (illustrated in Figure \ref{fig:gpt_prompt_short}); (b) a \textit{medium} 5-shot prompt that includes the five example puzzles that had been shown to (human) programmers during our user study (Appendix Figures \ref{fig:gpt_prompt_med}-\ref{fig:codex_prompt_med}); (c) a \textit{long} prompt with the same five puzzles augmented by English descriptions of the tasks in comments (Figures \ref{fig:gpt_prompt_long}-\ref{fig:codex_prompt_long}); and (d) a \textit{bootstrapping} prompt which uses only solutions to problems that it has already solved (Figures \ref{fig:prompt_bootstrap}). The bootstrapping prompt begins with no solutions but quickly exceeds the API maximum length as more puzzles are solved. At that point, previously solved puzzles are randomly sampled to form the prompt. 
The prompts used for Codex are slightly more sophisticated but enable multi-line programs.

The completions which parse as valid Python expressions are then evaluated. Appendix \ref{sec:gpt3_details} gives further details of the execution environment, the API parameters and other prompts we investigated.

\begin{figure}
\small
\begin{pyblock}
def f(li: List[int]):
    return len(li) == 10 and li.count(li[3]) == 2
    
assert True == f($\ldots$
\end{pyblock}
    \vspace{-1\baselineskip}
\caption{A Short prompt for a puzzle requesting a list of ten integers where the fourth item occurs exactly twice, with valid completion $\ldots$\pyline{[1,2,3,4,5]*2)}. Appendix \ref{sec:gpt3_details} has Medium/Long prompts.}
\label{fig:gpt_prompt_short}
    \vspace{-1\baselineskip}
\end{figure}



\section{Experiments}\label{sec:experiments}
We use our P3 dataset to evaluate the performance of the solvers from \S\ref{sec:solvers}. 
We assume no access to reference solutions\footnote{With the exception of the Medium and Long prompts that including five Tutorial problems and solutions.} and measure how many puzzles are solved by each solver with up to $k$ tries per puzzle, where each try is a potential solution that is evaluated. For the enumerative solvers, this is equivalent to having a valid solution ranked in the top $k$. For LM solvers, 
we use $\mathbf{pass}@k$ \citep{chen2021evaluating} which is an unbiased estimator of the probability of obtaining a solution within $k$ tries. First, we test the solvers bootstrapping efficacy in leveraging previously solved problems to solve new ones. Then, once solutions to a single instance of certain problems are found, we test whether solvers also succeed on other problem instances (i.e., puzzles originated from the same problem).
In \S\ref{sec:study_res}, we present our user study results that compares human's performance with AI solvers. Finally, in \S\ref{sec:397}, we test whether P3 can distinguish between subtly different variants of Codex, using the larger v0.2 release of the dataset (the current version at the time of publication). 


\paragraph{Learning from past solutions.} 
This first experiment was run on the v0.1 release of P3.\footnote{\url{https://github.com/microsoft/PythonProgrammingPuzzles/tree/v0.1}} We use a single puzzle instance per problem. We first identified the 138 of the 200 v0.1 problems supported by our grammar (see Appendix~\ref{sec:grammar_details}). For the enumerative solvers, we then ran the uniform solver with $k=10^4$ on these 138 problems supported by our, solving 38 of them. The solutions contain a total of 2,475 rules that we use to train the parametric models. In the bootstrapping variant, we repeat the training for 6 cycles, each time adding the new solutions found with $k=10^4$. In the final round, we allow up to $k=10^6$ solving tries (including previous cycles). For comparison to GPT-3/Codex,\footnote{The Codex API was released after this experiment had been run on v0.1 using the enumerative and GPT-3 solvers. Thus, we simply replaced the GPT-3 solver with the Codex solver and re-ran on the same 138 puzzles.}
we use the same 138 problems and start with a zero-shot prompt. As valid solutions are found, they are appended to the prompt as discussed in \S\ref{sec:gpt3_details}. 


Figure~\ref{fig:top_down_cum} shows the total number of puzzles solved by each enumerative solver, with and without the self-training bootstrapping cycles. We report the average results across three runs and present the standard deviation in the graph.
We see that the parametric models quickly improve over the naive uniform search and that the bootstrapping process facilitates solving many new problems. At $k=10^6$, the random forest and Transformer-based enumerative models solved a total of 68 and 76 problems, respectively, which is 28\% and 43\% more than the uniform solver.

The GPT-3 solver also improves by learning from previously found solutions. As Figure~\ref{fig:gpt3_cum} shows, few-shot settings with tutorial examples perform better than zero-shot (Short) and solve new problems. Including natural language descriptions (Long) helps for solving five more puzzles, with up to $10^4$ tries. The best strategy, however, is the bootstrapping one that starts without any reference and adds solutions to the prompt as they are found. Codex, trained on large amounts of code, performs the best (see Figure~\ref{fig:codex_cum}) but does not benefit significantly from bootstrapping.

\paragraph{Generalizing to other problem instances.} 
In the previous experiment, we attempted to solve the \emph{default} single puzzle instance of each problem. Next, we examine whether our solvers can also solve other puzzle instances, originating from the same problems. We collect a set of 700 puzzles that are random instances of 35 problems for which both our bootstrapping enumerative models solved the default puzzle. At $k=10^4$, the random forest and Transformer models solved 75\% and 79\%, respectively. As a reference, the uniform model solves only 62\% of these puzzles. 

\begin{figure}[t]
    \centering
    
         \small
     \begin{subfigure}[b]{0.32\textwidth}
         \centering
         \includegraphics[width=1.03\textwidth]{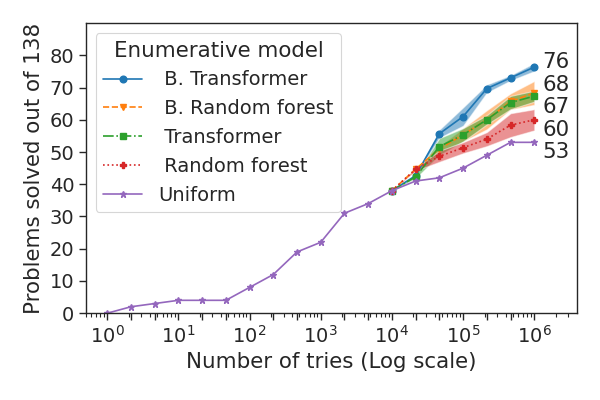}
       \vspace{-1.5\baselineskip}
         \caption{Enumerative solvers}
         \label{fig:top_down_cum}
     \end{subfigure}
     \begin{subfigure}[b]{0.32\textwidth}
         \centering
         \includegraphics[width=1.03\textwidth]{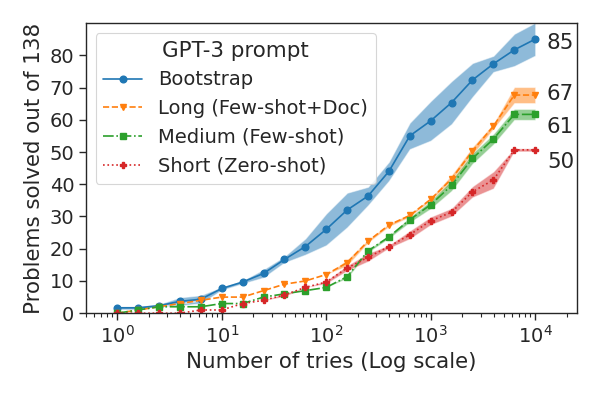}
         \vspace{-1.5\baselineskip}
         \caption{GPT-3 solvers}
         \label{fig:gpt3_cum}
     \end{subfigure}
     \begin{subfigure}[b]{0.32\textwidth}
         \centering
         \includegraphics[width=1.03\textwidth]{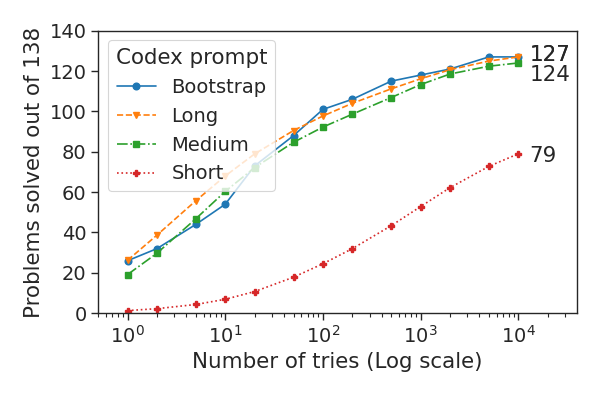}
         \vspace{-1.5\baselineskip}
         \caption{Codex solvers (extended y scale)}
         \label{fig:codex_cum}
     \end{subfigure}
    
    \caption{Increasing the number of tries allows solving new problems. Better solvers, though, solve new problems significantly faster by learning from past experience. Parametric enumerative solvers (a) initialized with the solutions of the uniform solver at $k=10^4$ accelerate the solution search. Additional self-training bootstrapping cycles (marked with B.) solve even more problems. GPT-3 (b) and Codex Davinci (c) solvers were evaluated with up to $10^4$ attempts. Having natural language descriptions (Long) provides small improvements over Medium. Adding previously found solutions to the prompt (Bootstrap) allows significant improvements for enumerative and GPT-3, and matches Long for Codex. Overall, the Codex models performed best, solving up to 127 of the examined 138 puzzles. (a), (b) are averaged across three runs and the shaded areas show the standard deviation.}
    \label{fig:cumsum}
\end{figure}


\subsection{User study}\label{sec:study_res}
In a small user study, 21 participants with varying experience in Python programming attempted to solve 30 puzzles, as found in v0.1 dataset 
as the \pyline{study} module. Each puzzle was allocated a maximum of 6 minutes to solve, and the study was conducted virtually using Jupyter notebooks. Participants were employees at a major software company and were recruited by email and at a hackathon. No compensation was offered. Participants were first given a short tutorial about puzzles and how to submit solutions. The user study files are available in the open-source dataset, and Appendix~\ref{sec:study_details} has further details including the 30 puzzles.


The first finding is that success in puzzles correlates with programming experience.
For our retrospective study analysis, we split the participants by the median years of Python programming experience. We had 10 \emph{beginners} with less than three years of experience, and 11 \emph{experienced} participants with at least three years. We find that 9 of the 30 puzzles were solved by all beginners, while 17 of the puzzles were solved by all experienced participants. 
Also, beginners spent on average 194 seconds per puzzle, while experienced spent only 149 seconds on average. The average solving time provides a useful proxy to the perceived difficulty of each puzzle.
Overall, we see that puzzles are easier for experienced programmers, indicating their value for evaluating programming proficiency.

\begin{wrapfigure}{R}{0.35\textwidth}
    \vspace{-1.4\baselineskip}
    \centering
    \small
    \includegraphics[width=0.35\textwidth]{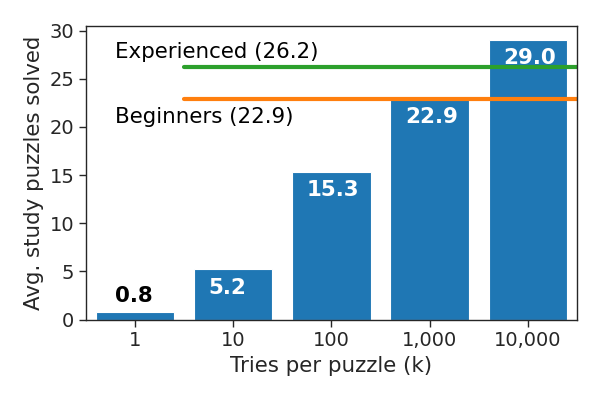}
    \vspace{-1\baselineskip}
    \caption{Number of solved puzzles by Codex-davinci (blue bars), compared to human coders with 6 minutes per puzzle (horizontal lines).}
    \label{fig:codex_study}
    \vspace{-1\baselineskip}
\end{wrapfigure}

Next, we compare human's performance to Codex-davinci. We use the Medium prompt as it is similar to the study format (i.e., same 5 tutorial examples, no docstrings). Participants solved an average of $24.6$ out of the 30 puzzles ($22.9$ for beginners and $26.2$ for experienced) within the 6 minutes per puzzle time limit. Only one out of the 21 participants solved all puzzles. As Figure~\ref{fig:codex_study} shows, Codex required 1K tries per puzzle to match the performance of beginner programmers in our study. 


Finally, we find that difficult puzzles for humans are also harder for AI.
Figure~\ref{fig:user_study_difficulty} shows that most of the puzzles solved by AI solvers with limited number of tries are the ones that are easier for humans (i.e., solved faster). To compare the two, we define a puzzle's perceived difficulty score as the average solving time for humans and the expected number of required tries for machines (normalized to $[0,1]$, where the score of unsolved puzzles is set to $1$). 
The Spearman's rank coefficient of humans with B.~GPT-3 is $0.512$, and with Codex (Med.) is $0.563$. The AI solvers correlation is stronger with beginner programmers ($0.541$ and $0.562$), than with the experienced ones ($0.470$ and $0.544$, respectively).
On the one hand, this suggests that additional computational power might allow AI solvers to match humans. However, as Figure~\ref{fig:cumsum} shows, this improvement is logarithmic, leading to diminishing returns. Encouragingly, we see that even within the same budget, modeling choices can improve performance. We hope that P3 will support the research and development of new AI solvers that will solve more puzzles with less computational effort.


\begin{figure}[t]
    \centering
    \small
    \includegraphics[width=1\textwidth]{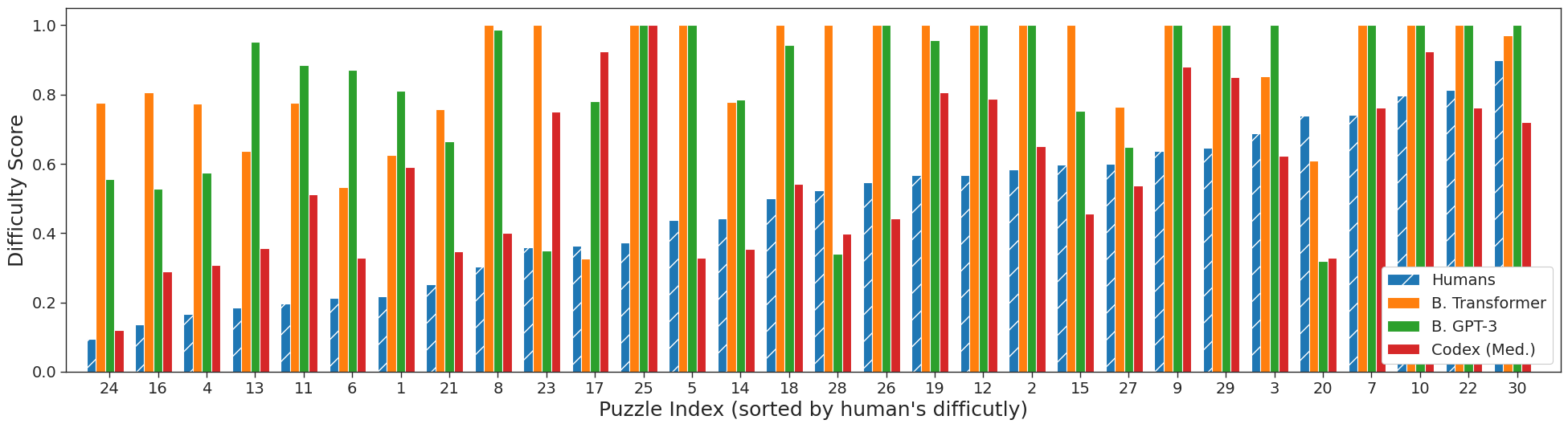}
    \vspace{-1\baselineskip}
    \caption{The difficulty score per study puzzle for both humans and AI solvers, sorted by the human's scores. The difficulty score for humans is measured by the average fraction of solving time out of the maximum allowed. For AI, we use the fraction of allotted attempts required. Most of the puzzles solved by AI (low difficulty score) are also easier for humans (left hand side of the plot).}
    \label{fig:user_study_difficulty}
    \vspace{-1\baselineskip}
\end{figure}

\subsection{Comparing small and large Codex models}\label{sec:397}



\begin{wraptable}{R}{0.47\textwidth}
\vspace{-4mm}
\caption{Codex $\mathbf{pass}@k$ results over P3 v0.2.}
    \resizebox{0.47\columnwidth}{!}{%
    \begin{tabular}{c|zzz}
    \toprule
          engine (prompt)   & \multicolumn{1}{c}{$k=1$} & \multicolumn{1}{c}{$10$} & \multicolumn{1}{c}{$100$} & \multicolumn{1}{c}{1,000} \\
            \midrule
         cushman (Med.) & ~~7.1\% & 26.7\% & 51.7\% & 68.3\% \\
         davinci (Med.) & 11.2\% & 36.7\% & 60.6\% & 75.3\% \\
         \midrule
         cushman (Long) & 14.9\% & 42.4\% & 63.9\% & 76.5\% \\
         davinci (Long) & 18.3\% & 48.7\% & 69.1\% & 79.8\% \\
         \bottomrule
    \end{tabular}
    }%
\label{tab:codex_res}
\end{wraptable}

In addition to the standard \textit{davinci-codex} engine, the API offers an alternate \textit{cushman-codex} engine that they report is significantly faster and only slightly less accurate. To test the ability of P3 as an evaluation of such fine distinctions, we ran the Medium and Long prompts on both engines across the most recent v0.2 release\footnote{\url{https://github.com/microsoft/PythonProgrammingPuzzles/tree/v0.2}} of 397 puzzles. As can be seen in the results of Table \ref{tab:codex_res}, the larger engine indeed slightly outperformed the smaller engine across all $k$. Thus, in this experiment, puzzle solving success aligns with code completion success. Also, we observe that English descriptions (Long prompt) are helpful for both engines. Inasmuch as puzzles are useful for code completion, the $< 20\%$ success rates at $k=1$ leaves substantial room for improvement.

Table~\ref{tab:top_down_cat_breakdown} shows the number of achieved solutions per domain, as well as an overall score computed as the macro-average of solving rates across domains.

\begin{table}[t]
    \centering
    \small
     \caption{Codex-davinci and Codex-cushman number of solved problems per domain with up to 1,000 tries for Medium and Long prompts.  The first row also shows the number of available P3 v0.2 problems in that domain. The score is the average percent solved across domains.}
      \begin{subtable}{1\columnwidth}
    \resizebox{1\columnwidth}{!}{%
    \begin{tabular}{c|sssssssss}
    \toprule
         Model & \multicolumn{1}{c}{Algebra} & \multicolumn{1}{c}{Basic} & \multicolumn{1}{c}{Chess} & \multicolumn{1}{c}{Classic} & \multicolumn{1}{c}{CodeForces} & \multicolumn{1}{c}{Compression} & \multicolumn{1}{c}{Conway's} & \multicolumn{1}{c}{Games} & \multicolumn{1}{c}{Graphs} \\
         \midrule
cushman (Med.) & 3/4 & 15/23 & 0/5 & 6/23 & 32/47 & 0/2 & 0/3 & 1/7 & 6/12 \\
\rowcolor{Gray} davinci (Med.) & 2 & 20 & 0 & 8 & 35 & 0 & 0 & 1 & 9 \\
cushman (Long) & 2 & 21 & 0 & 5 & 38 & 0 & 0 & 1 & 8 \\
\rowcolor{Gray} davinci (Long) & 2 & 22 & 1 & 4 & 39 & 0 & 0 & 1 & 8 \\
    \bottomrule
    \end{tabular}
    }%
    \vspace*{0.2\baselineskip}
    \newline
        \resizebox{1\columnwidth}{!}{%
    \begin{tabular}{c|sssssssss|d}
    \toprule
         Model & \multicolumn{1}{c}{HumanEval} & \multicolumn{1}{c}{ICPC} & \multicolumn{1}{c}{IMO} & \multicolumn{1}{c}{Lattices} & \multicolumn{1}{c}{N. Theory} & \multicolumn{1}{c}{Probability} & \multicolumn{1}{c}{Study} & \multicolumn{1}{c}{Trivial$^{-1}$} & \multicolumn{1}{c}{Tutorial} & \multicolumn{1}{|c}{Score} \\
         \midrule
cushman (Med.) & 139/164 & 2/4 & 1/6 & 0/2 & 8/16 & 2/5 & 21/30 & 33/39 & 5/5 & 44.2 \\
\rowcolor{Gray} davinci (Med.) & 145 & 2 & 1 & 1 & 9 & 3 & 22 & 36 & 5 & 51.2 \\
cushman (Long) & 149 & 1 & 1 & 1 & 9 & 3 & 24 & 36 & 5 & 49.8 \\
\rowcolor{Gray} davinci (Long) & 155 & 1 & 1 & 2 & 10 & 3 & 25 & 38 & 5 & 54.8 \\
    \bottomrule
    \end{tabular}
        }%
    \end{subtable}
   
    \label{tab:top_down_cat_breakdown}
     \vspace{-1\baselineskip}
\end{table}
\section{Related Work}\label{sec:related}

Program synthesis has taken drastically different forms for different applications, often resulting in one-off evaluations rather than common datasets. A major paradigm is Programming by Example (PbE) where problems are specified by input-output examples. For instance, several studies focus on text processing \citep{prose_automating} or robot navigation~\citep{karel_the_robot}. While convenient for end user applications (e.g., many in~\citep{polozov2015flashmeta}), PbE alone is inadequate to objectively describe many sophisticated algorithmic programming challenges.
A recent ARC dataset~\cite{chollet2019measure} adopts PbE for evaluating abstraction and reasoning in AI, but as in all PbE applications, there can be ambiguity.

Program synthesis from formal specifications has a long history of study \citep[surveyed in][]{synthesisSurvey2017}, benchmarked by e.g., the SyGuS competition~\citep{alur2019syguscomp}. In this setting, however, the AI system has to synthesize an algorithm that correctly and efficiently solves a problem on all inputs (and often prove correctness as well). Writing and testing such formal specifications is often non-trivial. 

English descriptions, often mixed with examples, are becoming an increasingly popular problem representation as LMs improve \citep{kulal_spoc, polosukhin2018neural, naps2018}. In independent work, \citet{hendrycks2021measuring} created a large dataset of English programming problems with examples on which they fine-tuned GPT models. In another concurrent work, the Codex model that powers the new GitHub Copilot auto-completion tool~\citep{chen2021evaluating} was evaluated with short problem descriptions paired with a set of unit tests that should validate the described specification.
Our work, together with this very recent and concurrent work \citep{chen2021evaluating, austin2021program, hendrycks2021measuring}, represent the first controlled evaluation of large Transformer-based LMs on general-purpose program synthesis. 

The recent CodeXGLUE benchmark~\citep{lu2021codexglue} collected several code-related datasets. To evaluate generation, they use CodeBLEU~\citep{ren2020codebleu} which relies on ASTs and other code-specific aspects. This evaluation still requires reference solutions and, therefore, does not resolve the answer-key bias with ambiguous specifications. Several neighboring fields that have made substantial progress in reasoning include theorem proving~\cite{coq}, two-player game playing~\cite{alphaZero}, and SAT-solving~\cite{satbook}. In all these fields, important progress has been made by encoding the problems, be they theorems, game rules, or optimization problems, in machine-readable formats that do not involve the ambiguities of natural language.

\section{Conclusions}
We introduce Python Programming Puzzles (P3), an open-source dataset with puzzles described only in source code. 
As discussed in \S\ref{sec:the_dataset}, the puzzle framework captures NP problems, which include a wide range of interesting challenges. Puzzles allow fast and objective evaluation, thereby supporting unsupervised solving without training solutions.
We implemented and evaluated several enumerative program-synthesis and LM baselines, and found a positive correlation between their per-puzzle performance and the difficulty for human programmers. Similarly, LMs that performed better at code completion also solved more puzzles with less tries.

We welcome contributions to P3 and hope it will grow in size, coverage, and utility.

\paragraph{Acknowledgments.} We would like to thank Mariia Mykhailova for suggesting doing a Python Programming Puzzles Hackathon. We are especially grateful to the participants in our user study and hackathon. We are grateful to the creators of Codex and GPT-3 and to Nicol\'{o} Fusi for suggesting its use in this project. We would like to thank David Alvarez Melis and Alec Helbing for suggesting quine puzzles. We are grateful to Ana-Roxana Pop for helpful discussions and feedback. We also thank Tianxiao Shen, Adam Fisch and the rest of the MIT NLP members for valuable writing feedback.







\bibliographystyle{plainnat}
\bibliography{bibliography}



\clearpage

\appendix
\counterwithin{figure}{section}
\counterwithin{table}{section}

We provide complementary details, analysis and results in the following sections:\\
\\
\ref{sec:example_solutions} \quad Example solutions by enumerative models \hfill Page \pageref{sec:example_solutions}\\
\ref{sec:td_details} \quad Enumerative solvers details \hfill Page \pageref{sec:td_details}\\
\ref{sec:gpt3_details} \quad Language Model solvers details \hfill Page \pageref{sec:gpt3_details}\\
\ref{sec:NP} \quad NP-completeness \hfill Page \pageref{sec:NP}\\
\ref{ap:open} \quad Open problems \hfill Page \pageref{ap:open}\\
\ref{ap:compare} \quad Comparing puzzles to competitive-programming problems \hfill Page \pageref{ap:compare}\\
\ref{sec:study_details} \quad User Study Details \hfill Page \pageref{sec:study_details}\\
\ref{sec:hanoi} \quad Solution to Tower of Hanoi \hfill Page \pageref{sec:hanoi}

\section{Example solutions by enumerative models}\label{sec:example_solutions}
We provide examples of our examined enumerative solvers to three P3 puzzle in Figure~\ref{fig:example_solutions} on page \pageref{fig:example_solutions} (examples of LM solutions are found in the P3 repository). The solution to the first puzzle is general and will work for any other instance of this problem. For the two other puzzles, the obtained solutions are instance-specific and don't even use the input variables. Yet, it is possible that the logical steps to achieve the answer are implicitly executed by the model. To test this, we evaluate the solvers on other problem instances (i.e., puzzles originated from the same problem).

The solvers' solutions to the first puzzle in Figure~\ref{fig:example_solutions} are simpler than the one created by humans (though less efficient in terms of input length). This illustrates another potential use case of AI solvers: debugging puzzles by finding easy solutions.

\begin{figure}
\small
\begin{pyblock}
# Sum of digits.
def sat1(x: str, s: int=679):
    return s == sum([int(d) for d in x])
    
# B. Random forest solution.
def sol(s):  
    return ((chr(49))*(COPY(s)))
    
# B. Transformer solution.
def sol(s):  
    return ((COPY(s))*(str(1)))

# Human-written solution.
def sol(s):  
    return int(s/9) * '9' + str(s
    
----
# Line intersection.
def sat2(e: List[int], a: int=2, b: int=-1, c: int=1, d: int=2021):
    x = e[0] / e[1]
    return abs(a * x + b - c * x - d) < 10 ** -5
    
# B. Random forest and B. Transformer solution (identical).
def sol(a, b, c, d):
    return ([2022, 1, ])
    
# Human-written solution.
def sol(a, b, c, d):
    return [d - b, a - c]
    
---
# Find the three slice indices that give the specific target in string s.
def sat3(inds: List[int], s: str="hello world", target: str="do"):
    i, j, k = inds
    return s[i:j:k] == target
    
# B. Random forest solution.
def sol(s, target):
    return ([12, 5, -(3), ])
    
# B. Transformer solution.
def sol(s, target):
    return ([11, 1, -(6), ])
    
# Human-written solution.
def sol(s, target):
    from itertools import product
    for i, j, k in product(range(-len(s) - 1, len(s) + 1), repeat=3):
        try:
            if s[i:j:k] == target:
                return [i, j, k]
        except (IndexError, ValueError):
            pass
\end{pyblock}

\caption{Example of three P3 puzzles and the solutions found by our examined solvers. The natural language description of each problem is provided for ease of read, but is hidden to these models. Human-written solutions are provided here for reference, but are also hidden from AI solvers.}
\label{fig:example_solutions}
\end{figure}
\section{Enumerative solvers details}\label{sec:td_details}
We train our random forest solver with the Python Skickit-learn library~\citep{scikit-learn}. The 
Transformer model is implemented on top of the Hugging Face repository~\citep{wolf-etal-2020-transformers}. We use GPUs for training the Transformer and for querying it for rule probabilities. All other computations are performed with CPUs. Making up to $10^4$ solution tries takes only a few seconds to a few tens of seconds, depending on the puzzle and the attempted solutions. Running up to $10^6$ solution tries usually takes less than an hour but for some puzzles can take longer. We run the solver in parallel on multiple puzzles to reduce the global computation time.

\paragraph{Solution validation.} Given the AST, a solution is generated in the form of a Python program (possibly multiple lines) that is evaluated by the Python interpreter to get an answer that is tested by the puzzle. To address long-running programs and infinite loops, timeout checks are added to the puzzles and to the solution during conversion from AST to Python. Alternatively, the programs could be evaluated in a sandbox as is done in programming competitions and as we did for the LM generators, though a sandbox imposes an additional overhead. 

\subsection{Vocabulary}\label{sec:grammar_details}
We use a grammar for a subset of Python covering the following basic objects: Booleans, unlimited-precision integers, floats, strings, lists, sets, dictionaries, generators, and tuples. 
Table \ref{tab:grammar} summarizes the grammar. These rules occur multiply, for instance the addition rule has instantiations for adding two strings, two integers, an integer and a float, etc., where each Python type corresponds to a non-terminal in our grammar. However, because Python is a duck-typed language, in several cases a variable can be used with multiple different types. To handle such programs, we also have a generic non-terminal which can correspond to any Python object, and this makes our grammar ambiguous. For instance, the program \pyline{1+1} can be parsed either as the sum of two integers or as the sum of two Python objects, also using a rule mapping an object to an integer. This latter program is a larger AST and hence will typically have lower probability, hence we have the advantages of types when possible but the flexibility to generate fully duck-typed code. In this manner we are able to parse puzzles from 138 of our 200 problems.  We also use this grammar to generate timed and safe Python code. In particular, we inject timing checks into comprehensions and loops, and we also add timing checks to potentially time-consuming operations such as exponentiation or string multiplication. This grammar is available upon request for researchers who wish to use it in further projects.

\begin{table}[ht]
    \centering
        \caption{The grammar for a subset of Python.}
        \resizebox{1\columnwidth}{!}{%
    \begin{tabular}{r|lr|lr|l}
    \toprule
    Rule name & rule & Rule name & rule & Rule name & rule \\\midrule
\pyline{!=}     & \pyline{(_)!=(_)}           & \pyline{[list]} & \pyline{[_]}                & \pyline{is not} & \pyline{(_)is not(_)}       \\
\pyline{&}      & \pyline{(_)&(_)}            & \pyline{\%}     & \pyline{(_)\%(_)}           & \pyline{issubset} & \pyline{(_).issubset(_)}    \\
\pyline{(tuple)} & \pyline{(_, _)}             & \pyline{\{set\}} & \pyline{\{_\}}              & \pyline{issuperset} & \pyline{(_).issuperset(_)}  \\
\pyline{(tuple)} & \pyline{(_, _, _)}          & \pyline{^}      & \pyline{(_)^(_)}            & \pyline{join}   & \pyline{(_).join(_)}        \\
\pyline{(tuple)} & \pyline{(_, _, _, _)}       & \pyline{abs}    & \pyline{abs(_)}             & \pyline{len}    & \pyline{len(_)}             \\
\pyline{*}      & \pyline{(_)*(_)}            & \pyline{all}    & \pyline{all(_)}             & \pyline{list}   & \pyline{list(_)}            \\
\pyline{**}     & \pyline{(_)**(_)}           & \pyline{and}    & \pyline{(_)and(_)}          & \pyline{log}    & \pyline{log(_)}             \\
\pyline{*=}     & \pyline{(_)*=(_)}           & \pyline{any}    & \pyline{any(_)}             & \pyline{max}    & \pyline{max(_)}             \\
\pyline{*=}     & \pyline{_ *= (_)}           & \pyline{append} & \pyline{(_).append(_)}      & \pyline{min}    & \pyline{min(_)}             \\
\pyline{*args}  & \pyline{*_}                 & \pyline{arg}    & \pyline{_, _}               & \pyline{not}    & \pyline{not (_)}            \\
\pyline{*args}  & \pyline{*_, **_}            & \pyline{arg}    & \pyline{_: _, _}            & \pyline{not in} & \pyline{(_) not in (_)}     \\
\pyline{+}      & \pyline{(_)+(_)}            & \pyline{assert} & \pyline{assert _}           & \pyline{or}     & \pyline{(_)or(_)}           \\
\pyline{+=}     & \pyline{(_)+=(_)}           & \pyline{assert} & \pyline{assert _, _}        & \pyline{ord}    & \pyline{ord(_)}             \\
\pyline{+=}     & \pyline{_ += (_)}           & \pyline{bool}   & \pyline{bool(_)}            & \pyline{range}  & \pyline{range(_)}           \\
\pyline{+unary} & \pyline{+(_)}               & \pyline{chr}    & \pyline{chr(_)}             & \pyline{range}  & \pyline{range(_, _)}        \\
\pyline{-}      & \pyline{(_)-(_)}            & \pyline{cos}    & \pyline{cos(_)}             & \pyline{range}  & \pyline{range(_, _, _)}     \\
\pyline{-=}     & \pyline{(_)-=(_)}           & \pyline{count}  & \pyline{(_).count(_)}       & \pyline{replace} & \pyline{(_).replace(_, _)}  \\
\pyline{-unary} & \pyline{-(_)}               & \pyline{def}    & \pyline{def _(_):     _}    & \pyline{return} & \pyline{return (_)}         \\
\pyline{/}      & \pyline{(_)/(_)}            & \pyline{def_ANY_tuple} & \pyline{(_)}                & \pyline{reversed} & \pyline{reversed(_)}        \\
\pyline{//}     & \pyline{(_)//(_)}           & \pyline{default_arg} & \pyline{_: _=_, _}          & \pyline{revsorted} & \pyline{sorted(_, reverse=True)} \\
\pyline{//=}    & \pyline{(_)//=(_)}          & \pyline{default_arg} & \pyline{_=_, _}             & \pyline{round}  & \pyline{round(_)}           \\
\pyline{:slice} & \pyline{_:_:_}              & \pyline{endswith} & \pyline{(_).endswith(_)}    & \pyline{round}  & \pyline{round(_, _)}        \\
\pyline{<}      & \pyline{(_)<(_)}            & \pyline{exp}    & \pyline{exp(_)}             & \pyline{set}    & \pyline{set(_)}             \\
\pyline{<<}     & \pyline{(_)<<(_)}           & \pyline{f_string} & \pyline{f'_'}               & \pyline{sin}    & \pyline{sin(_)}             \\
\pyline{<=}     & \pyline{(_)<=(_)}           & \pyline{float}  & \pyline{float(_)}           & \pyline{sorted} & \pyline{sorted(_)}          \\
\pyline{=}      & \pyline{(_)=(_)}            & \pyline{float-const} & \pyline{_._}                & \pyline{split}  & \pyline{(_).split(_)}       \\
\pyline{==}     & \pyline{(_)==(_)}           & \pyline{float-const-large} & \pyline{_._e_}              & \pyline{split}  & \pyline{(_).split()}        \\
\pyline{>}      & \pyline{(_)>(_)}            & \pyline{float-const-tiny} & \pyline{_._e-_}             & \pyline{startswith} & \pyline{(_).startswith(_)}  \\
\pyline{>=}     & \pyline{(_)>=(_)}           & \pyline{for}    & \pyline{for (_) in (_):     _} & \pyline{str}    & \pyline{str(_)}             \\
\pyline{COPY}   & \pyline{COPY(_)}            & \pyline{for}    & \pyline{for (_, _) in (_):     _} & \pyline{str-const} & \pyline{"_"}                \\
\pyline{[-1]}   & \pyline{(_)[-1]}            & \pyline{for_in_if} & \pyline{for _ in (_) if _}  & \pyline{sum}    & \pyline{sum(_)}             \\
\pyline{[-2]}   & \pyline{(_)[-2]}            & \pyline{formatted_value} & \pyline{\{_:_\}}            & \pyline{tuple}  & \pyline{tuple(_)}           \\
\pyline{[-3]}   & \pyline{(_)[-3]}            & \pyline{if}     & \pyline{if _:     _}        & \pyline{type}   & \pyline{type(_)}            \\
\pyline{[-4]}   & \pyline{(_)[-4]}            & \pyline{if}     & \pyline{if _:     _ else:     _} & \pyline{union}  & \pyline{(_).union(_)}       \\
\pyline{[0]}    & \pyline{(_)[0]}             & \pyline{ifExp}  & \pyline{(_) if (_) else (_)} & \pyline{zip}    & \pyline{zip(_, _)}          \\
\pyline{[1]}    & \pyline{(_)[1]}             & \pyline{in}     & \pyline{(_) in (_)}         & \pyline{zip}    & \pyline{zip(_, _, _)}       \\
\pyline{[2]}    & \pyline{(_)[2]}             & \pyline{index}  & \pyline{(_).index(_)}       & \pyline{|}      & \pyline{(_)|(_)}            \\
\pyline{[3]}    & \pyline{(_)[3]}             & \pyline{int}    & \pyline{int(_)}             &  & \\
\pyline{[i]}    & \pyline{(_)[_]}             & \pyline{is}     & \pyline{(_)is(_)}           &  & \\
\bottomrule
    \end{tabular} 
    }
    \label{tab:grammar}
\end{table}

\subsection{Transformer implementation}\label{sec:transformer_details}

\begin{figure}[ht]
    \centering
    \small
    \includegraphics[width=0.5\textwidth]{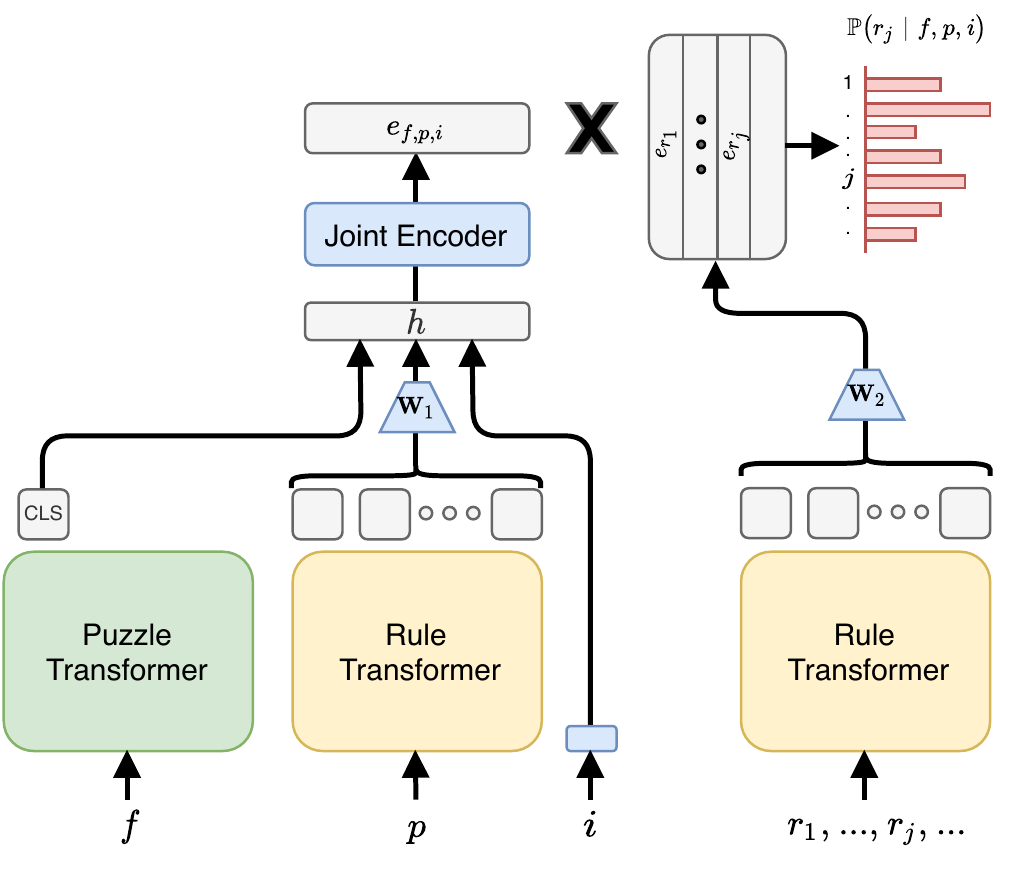}
    \caption{An illustration of our Transformer-based enumerative solver. The rule strings are encoded with a Transformer pretrained on Python code. The puzzle Transformer is initialized the same, but is further fine-tuned on puzzles, together with the rest of the solver's parameters shown in blue color. The left hand side of a diagram represents the encoding of the puzzle $f$, parent rule $p$, and child index $i$, each separately and then combined to a joint representation $e_{f,p,i}$. All rules $r\in\mathcal{R}$ are also encoded with the Transformer and projected to the same dimension as $e_{f,p,i}$. The output probability of $r$ being the $i$'s child of $p$ in the solution tree $g$ to puzzle $f$ is computed by a softmax over the product of $e_{f,p,i}$ with all rule representations. Encoding the puzzle and the parent rule first separately, allows passing the puzzle only once during inference, and computing all rule embeddings in advance.}
    \label{fig:transformer}
\end{figure}

We use the RoBERTa-base 12-layers Transformer~\citep{liu2019roberta} pretrained on English text and fine-tune it on Python code using the Hugging Face library~\citep{wolf-etal-2020-transformers}. For fine-tuning data, we use Python functions with their documentation text from GitHub repositories~\citep{husain2019codesearchnet}. In order to better adjust the tokenizer to Python code, we retrain a Byte-level BPE tokenizer on our Python fine-tuning data. We use the same vocabulary size as the original tokenizer and keep the token embeddings of the overlapping ones (39\%). For the other tokens, we initialize new token embeddings. Thereafter, we fine-tune RoBERTa with a masked language modeling task for 30 epochs. This model, which we denote by $T_P$, achieved an impressive $3.3$ perplexity score on held-out evaluation data, indicating its success in learning Python's syntax.

Next, we use $T_P$ to encode dense embeddings $e_r = T_P(r)$ for all the rules $r$ in our vocabulary $\mathcal{R}$. As input to the Transformer, we use a string representation of the Python operation and types of each rule. For example, \pyline{(x)//(y)' '// -> FLOAT :: (x: INT, y: FLOAT)} is used to describe the rule for the \pyline{//} operation with an integer and float inputs, resulting in a float. Then, we take $e_r$ as the average across the top-layer embeddings of all tokens. 

Finally, we design a neural model on top of $T_P$ to predict $\mathbb{P}(r_j|\phi(f), p, i)$ for each puzzle $f$ where $p$ is the parent rule and $i$ is the child index. The model computes a hidden representation of the puzzle with the parent rule as a concatenation $h=[T_P(f), \mathbf{W}_1 e_{p}, e_i] \in \mathbb{R}^{d+d_r+d_i}$, where $e_i \in \mathbb{R}^{d_i}$ is a learned embedding for the child rule index, $\mathbf{W}_1 \in \mathbb{R}^{d_r}$ is a learned linear projection, and $d$ is the hidden dimension of $T_P$. To obtain $\phi(f)$, we duplicate $T_P$ and further fine-tune it with the rest of the solver parameters, while keeping the rule Transformer fixed as $T_P$. Specifically, we use the $\texttt{[CLS]}$ embedding of the top layer as $\phi(f)$. Fixing the rule encoder prevents overfitting to the rules seen in the puzzle-solution fine-tuning pairs. $h$ is then passed through two non-linear layers, where the first also projects it to $\mathbb{R}^{d_r}$, with a gelu activation~\citep{hendrycks2016gaussian} and batch normalization~\citep{ioffe2015batch} to get a joint puzzle and parent rule embedding $e_{f,p,i}$. The score of rule $r_j$ then being the $i$'s argument of $r$ in the solution to $f$ is determined by the dot product of its projected embedding $e_{r_j}$ with the parent's embedding: $p_{r_j|\phi(f), e_{p}, e_i} \propto  e_{f,p,i} \cdot (\mathbf{W}_2 e_{r_j})^T$. Similar to the Random Forest fitting process, we use all parent-child rule pairs from the previously obtained solutions for fine-tuning. We use cross-entropy loss with an Adam optimizer. 
See Figure~\ref{fig:transformer} for a model diagram.
\section{Language Model solvers details}\label{sec:gpt3_details}
The GPT-3 and Codex APIs were used to generate completions based on prompts. For all models, the completions were generated in batches of \pyline{n=32} with \pyline{temp=0.9}, for a maximum of 150 tokens, with default values of  \pyline{top_p=1}, \pyline{presence_penalty=0}, \pyline{frequency_penalty=0}, and \pyline{best_of=1}. The resulting programs were evaluated in a sandbox limited to 1 second on Intel Xeon Platinum 8272CL CPUs at 2.60GHz. The timeout was necessary since a number of solution generators would take prohibitive resources such as \pyline{"a"*(10**(100))} which would generate a string of length googol. The solutions where also checked to be of the type requested in the problem, as was the case for the top-down solver. Figure \ref{fig:typecheck} illustrates a puzzle where type checking matters. 

\begin{figure}

\begin{pyblock}
def f(s: str):
    return s.count("o") == 1000 and s.count("oo") == 0
\end{pyblock}

\caption{A puzzle where type-checking is important. 
A type-safe solution is computed by the program returning \pyline{"ox" * 1000}. However, \pyline{["o"] * 1000} would be considered invalid as it is a list of strings, though it does satisfy the puzzle as stated.\label{fig:typecheck}}
\end{figure}

\paragraph{Prompt programming.}
The space of possible prompts is practically boundless. Our current prompt designs leverage the API without fine-tuning. For GPT-3, among the prompts we experimented with, we found that the \pyline{assert} structure worked best but it was limited to one-line Python solutions. One-line Python programs are considered, by some, to be a useful form of programming with books dedicated to the topic \citep[see, e.g.,][]{pythonOneLiners}. 
For Codex, we found a prompt that resembled a legal python file with a multi-line solution structure worked better.

Numerous other prompts were considered. For instance, we tried adding a preface stating, ``A Programming Puzzle is a short python function, and the goal is to find an input such that the function True. In other words, if program computes a function f, then the goal is to find x such that f(x)=True.'' 

\begin{figure}
\begin{pyblock}
def f1(s: str):
    return "Hello " + s == "Hello world"
    
assert True == f1("world")

---

def f2(s: str):
    return "Hello " + s[::-1] == "Hello world"

assert True == f2("world"[::-1])

---
def f3(x: List[int]):
    return len(x) == 2 and sum(x) == 3

assert True == f3([1, 2])

---
def f4(s: List[str]):
    return len(set(s)) == 1000 and all(
                   (x.count("a") > x.count("b")) and ('b' in x) for x in s)

assert True == f4(["a" * (i + 2) + "b" for i in range(1000)])

---

def f5(n: int):
    return str(n * n).startswith("123456789")

assert True == f5(int(int("123456789" + "0"*9) ** 0.5) + 1)

---

def f6(li: List[int]):
    return len(li) == 10 and li.count(li[3]) == 2

assert True == f6(...
\end{pyblock}

\caption{The medium-length prompt, used for GPT-3. The first five example puzzles \pyline{f1-f5} were shown to people in the user study and \pyline{f6} is the one that is being solved. GPT-3's completion was \pyline{...[1,2,3,3,4,5,6,7,8,9])}}
\label{fig:gpt_prompt_med}
\end{figure}

\begin{figure}
\begin{pysmall}
from typing import List

def f1(s: str):
    return "Hello " + s == "Hello world"

def g1():
    return "world"

assert f1(g1())

def f2(s: str):
    return "Hello " + s[::-1] == "Hello world"

def g2():
    return "world"[::-1]

assert f2(g2())

def f3(x: List[int]):
    return len(x) == 2 and sum(x) == 3

def g3():
    return [1, 2]

assert f3(g3())

def f4(s: List[str]):
    return len(set(s)) == 1000 and all((x.count("a") > x.count("b")) and ('b' in x) for x in s)

def g4():
    return ["a"*(i+2)+"b" for i in range(1000)]

assert f4(g4())

def f5(n: int):
    return str(n * n).startswith("123456789")

def g5():
    return int(int("123456789" + "0"*9) ** 0.5) + 1

assert f5(g5())

def f6(inds: List[int], string="Sssuubbstrissiingg"):
    return inds == sorted(inds) and "".join(string[i] for i in inds) == "substring"
    
def g6(string="Sssuubbstrissiingg"):
\end{pysmall}
Codex completed it successfully as:
\begin{pysmall}
    inds = []
    ind = 0
    for c in "substring":
        while string[ind] != c:
            ind += 1
        inds.append(ind)
        ind += 1
    return inds
\end{pysmall}

\caption{The medium-length prompt, used for Codex. The first five example puzzles \pyline{f1-f5} were given in the tutorial to participants in the user study and \pyline{f6} is the puzzle that is being solved.}
\label{fig:codex_prompt_med}
\end{figure}

\begin{figure}[ht]
\begin{pyblock}
def f1(s: str):
    """Find a string that when concatenated onto 'Hello ' gives 'Hello world'."""
    return "Hello " + s == "Hello world"
    
assert True == f1("world")

---

def f2(s: str):
    """Find a string that when reversed and concatenated onto 'Hello ' gives 'Hello world'."""
    return "Hello " + s[::-1] == "Hello world"
    
assert True == f2("world"[::-1])

---

def f3(x: List[int]):
    """Find a list of two integers whose sum is 3."""
    return len(x) == 2 and sum(x) == 3
    
assert True == f3([1, 2])

---

def f4(s: List[str]):
    """Find a list of 1000 distinct strings which each have more 'a's than 'b's and at least one 'b'."""
    return len(set(s)) == 1000 and all(
                   (x.count("a") > x.count("b")) and ('b' in x) for x in s)

assert True == f4(["a" * (i + 2) + "b" for i in range(1000)])

---

def f5(n: int):
    """Find an integer whose perfect square begins with 123456789 in its decimal representation."""
    return str(n * n).startswith("123456789")
    
assert True == f5(int(int("123456789" + "0"*9) ** 0.5) + 1)

---

def f6(li: List[int]):
    """Find a list of length 10 where the fourth element occurs exactly twice."""
    return len(li) == 10 and li.count(li[3]) == 2

assert True == f6(...
\end{pyblock}

\caption{An example GPT-3 Long prompt which includes English descriptions in the Python docstrings. As in the medium-length prompts, the first five example puzzles \pyline{f1-f5} were shown to people in the user study and \pyline{f6} is the one that is being solved. 
}
\label{fig:gpt_prompt_long}

\end{figure}

\begin{figure}[ht]
\begin{pysmall}
from typing import List

def f1(s: str):
    return "Hello " + s == "Hello world"

def g1():
    """Find a string that when concatenated onto 'Hello ' gives 'Hello world'."""
    return "world"

assert f1(g1())

def f2(s: str):
    return "Hello " + s[::-1] == "Hello world"

def g2():
    """Find a string that when reversed and concatenated onto 'Hello ' gives 'Hello world'."""
    return "world"[::-1]

assert f2(g2())

def f3(x: List[int]):
    return len(x) == 2 and sum(x) == 3

def g3():
    """Find a list of two integers whose sum is 3."""
    return [1, 2]

assert f3(g3())

def f4(s: List[str]):
    return len(set(s)) == 1000 and all(
                   (x.count("a") > x.count("b")) and ('b' in x) for x in s)

def g4():
    """Find a list of 1000 distinct strings which each have more 'a's than 'b's and at least one 'b'."""
    return ["a"*(i+2)+"b" for i in range(1000)]

assert f4(g4())

def f5(n: int):
    return str(n * n).startswith("123456789")

def g5():
    """Find an integer whose perfect square begins with 123456789 in its decimal representation."""
    return int(int("123456789" + "0"*9) ** 0.5) + 1

assert f5(g5())

def sat(d: int, n=6002685529):
        return n 
        
def sol(n=6002685529):
    """Find a integer factor of n whose decimal representation consists only of 7's and 4's."""
\end{pysmall}
\caption{An example Codex long prompt which includes English descriptions in the Python docstrings. As in the medium-length prompts, the first five example puzzles \pyline{f1-f5} were shown to people in the user study and \pyline{f6} is the one that is being solved. The representation here supports multi-line solutions. Codex's successful completion is shown in Figure \ref{fig:codex_prompt_long_completion}.
}
\label{fig:codex_prompt_long}

\end{figure}

\begin{figure}
\begin{pyblock}
    # return the closest factor to n which is inferior either to n or to the sqrt(n)
        if n == 1:     # def.: n>1!
            return 1
        i = int(n ** 0.5)
        while True:
            if f6(i):
                return i
            elif i == 2:
                raise Exception("Could not find any 7 or 4 number factors for n.")
            i -= 1
\end{pyblock}
\caption{An example Codex completion to the long prompt illustrated in \ref{fig:codex_prompt_long}. The solution cleverly calls \pyline{f6}, a trick that some participants used to solve a couple of the user study puzzles. Codex also generated English comments.
}
\label{fig:codex_prompt_long_completion}
\end{figure}

Interestingly, a handful of generations included potentially dangerous commands such as \pyline{eval} and \pyline{__import__("os").system(...}, but a cursory inspection did not detect any that used them in malicious ways. We do advise caution in executing generated code, as malicious actors can take advantage of such routine \citep{schuster2021autocomplete}. There are several libraries available for scoring programming competitions to serve this purpose. Also, some of the generated code seemed especially human-like, e.g.: \pyline{raise RuntimeError("this is a joke.")} which of course did not solve the puzzle at hand.

Figures~\ref{fig:gpt_prompt_short}, \ref{fig:gpt_prompt_med}, \ref{fig:codex_prompt_long}-\ref{fig:gpt_prompt_long}, and \ref{fig:prompt_bootstrap} show our prompts for the Short, Medium, Long, and Bootstrap prompts, respectively.

\begin{figure*}

\begin{pysmall}
from typing import List

def f1(item: int, li=[17, 2, 3, 9, 11, 11], index=4):
    return li.index(item) == index

def g1(li=[17, 2, 3, 9, 11, 11], index=4):
    return li[index]

assert f1(g1())

def f2(s: str, word="antidisestablishmentarianism", max_len=10):
    if len(word) <= max_len:
        return word == s
    return int(s[1:-1]) == len(word[1:-1]) and word[0] == s[0] and word[-1] == s[-1]

def g2(word="antidisestablishmentarianism", max_len=10):
    if len(word) == max_len:
        return word
    return word[0] + str(len(word[1:-1])) + word[-1]

assert f2(g2())

#
# omitting 28 random puzzles that Codex solved...
#

assert f31(g31())

def f32(x: List[int], a=7, s=5, e=200):
    return x[0] == a and x[-1] <= e and (x[-1] + s > e) and all([x[i] + s == x[i+1] for i in range(len(x)-1)])

def g32(a=7, s=5, e=200):
\end{pysmall}

\begin{pysmall}
def f1(s: str, a: List[str]=['cat', 'dot', 'bird'], b: List[str]=['tree', 'fly', 'dot']):
    return s in a and s in b

assert True == f1('dot')

---

def f2(li: List[int]):
    return all([sum(li[:i]) == i for i in range(20)])

assert True == f2(list(map(lambda x: 1, range(100))))

#
# omitting 22 random puzzles that GPT-3 solved...
#

---

def f25(probs: List[float]):
    assert len(probs) == 3 and abs(sum(probs) - 1) < 1e-6
    return max(probs[(i + 2) 

assert True == f25(
\end{pysmall}

\caption{Example bootstrapping prompts for the Codex and GPT-3 LMs. The prompts includes random solved puzzles among those that the LM solved, truncated to the token limit of the API (2048 for GPT3 and 4096 for Codex).
}
\label{fig:prompt_bootstrap}

\end{figure*}

\paragraph{Smoothing evaluation.} Rather than simply generating solutions until the first correct one is found, to evaluate the Short, Medium and Long prompts, we generate 10,000 solutions for each puzzle. This gives us more than one solution for some puzzles, which we use for improved accuracy in estimating how many solutions are necessary (on average) to solve each puzzle shown in Figure \ref{fig:gpt3_cum}. We use the unbiased estimator of $\mathbf{pass}@k$ defined by \citet{chen2021evaluating}.

\section{NP-completeness}\label{sec:NP}
Before formally proving that the puzzle decision problem is NP-complete, note that the Boolean Satisfiability problem (SAT) is NP-complete and any Boolean SAT formula such as $(x_0 \vee \neg x_7 \vee x_{17}) \wedge \ldots$ can trivially be rewritten as a puzzle, e.g., 
\begin{pyblock}
def f(x: List[bool]):
   return (x[0] or not x[7] or x[17]) and ...
\end{pyblock}
The size of \pyline{f} is linear in the formula size. Thus converting a SAT formula to a puzzle is natural and does not make the problem much bigger or harder. 

However, a common misconception is that NP-complete problems are all equally intractable, but the theory of NP-completeness only speaks to the worst-case complexity of solving all puzzles. While any of our puzzles could theoretically be converted to a SAT formula, the resulting formula would be mammoth without any abstraction or intuition. For example, consider the following puzzle,
\begin{pyblock}
def f(d: int):  # find a non-trivial integer factor
    """Hint, try d = 618970019642690137449562111 ;-)"""
    n = 100433627766186892221372630609062766858404681029709092356097
    return 0 < d < n and n 
\end{pyblock}
This puzzle is identical to the factoring puzzle \pyline{f3} from Figure \ref{fig:examples} except that the answer is given away in a comment. Any natural compiler from Python to SAT would ignore comments so the SAT form of this trivial puzzle would be quite hard. While we are not aware of such a compiler, there are programs that convert a factoring problem to a SAT instance. We ran such a  converter \href{http://cgi.cs.indiana.edu/~sabry/cnf.cgi?factor=100433627766186892221372630609062766858404681029709092356097&Adder=nbit&Multiplier=carrysave}{http://cgi.cs.indiana.edu/~sabry/cnf.html} on this \pyline{n} and it generated a formula with 113,878 variables and 454,633 terms! This illustrates that not all polynomials are small, and that some easy puzzles may become hard puzzles in such a conversion. The theory of NP-completeness only guarantees that if one can efficiently solve \textit{every} SAT instance one could efficiently solve every puzzle, but specific easy puzzles may become quite hard SAT formulas. 

\subsection{Proof of NP-completeness}
Formally, a puzzle $f$ represents a Turing machine as a string, a timeout $t$ is a positive integer represented in unary, and the decision problem is, given $(f,x,t)$, does there exist $y$ such that when the Turing machine $f$ is run on $(y, x)$, it halts in fewer than $t$ steps and outputs 1. The time constraint is necessary to ensure that the puzzle decision problem is in $\NP$. It is well-known that this problem is in NP and, moreover is NP-complete:
\begin{observation}
The puzzle decision problem is NP-complete.
\end{observation}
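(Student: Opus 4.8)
The plan is to verify the two halves of NP-completeness separately: membership in $\NP$ and $\NP$-hardness.

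For membership, I would argue that the puzzle decision problem is essentially the canonical $\NP$ problem. Given an instance $(f, x, t)$, a candidate witness is a string $y$; but we must be careful that $y$ could in principle be enormous (the excerpt itself notes that short solutions can have long outputs). The key observation is that since the Turing machine $f$ is required to halt within $t$ steps, it can read at most $t$ symbols of its input, so without loss of generality we may restrict attention to witnesses $y$ of length at most $t$. Thus $|y|$ is polynomial in the input size (recall $t$ is given in unary, so $t \le |{\rm input}|$), and the verification — simulating $f$ on $(y,x)$ for at most $t$ steps and checking the output is $1$ — runs in time polynomial in $|f| + |x| + t$. This is exactly a polynomial-time verifier for a polynomial-size certificate, so the problem is in $\NP$. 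I would present this in one short paragraph, emphasizing the unary encoding of $t$ as the detail that makes everything polynomial.

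For $\NP$-hardness, I would reduce from Boolean Satisfiability (SAT), which is $\NP$-complete by Cook--Levin. As already sketched in the excerpt, given a SAT formula $\varphi$ over variables $x_0, \ldots, x_{m-1}$, I build a puzzle whose verifier $f$ takes a list of booleans \pyline{x: List[bool]} and returns the evaluation of $\varphi$ on that assignment; equivalently, at the Turing-machine level, $f$ on input $(y,x)$ parses $y$ as an assignment of length $m$ and evaluates $\varphi$ (passed via $x$, or hard-coded). I set the timeout $t$ to be a fixed polynomial in the size of $\varphi$ — large enough to parse $y$ and evaluate every clause, which is clearly $O(|\varphi|)$ steps up to a polynomial overhead for the encoding. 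Then $\varphi$ is satisfiable iff there exists $y$ making $f$ accept within $t$ steps, so this is a correct reduction, and it is computable in polynomial time since $f$ and $t$ have size polynomial in $|\varphi|$.

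The only real subtlety — and the step I would be most careful about — is making sure the timeout $t$ genuinely bounds the running time by a polynomial that I can compute in advance, rather than something that depends on the (unknown) witness; but since evaluating a fixed formula on a given assignment takes time linear in the formula size, this is routine, and I would just state the bound rather than compute constants. I would close by noting that combining membership and hardness gives $\NP$-completeness, and perhaps remark (echoing the preceding discussion in the paper) that this worst-case equivalence with SAT says nothing about the practical difficulty of any particular puzzle.
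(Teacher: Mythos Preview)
Your argument is correct. The membership half matches the paper's proof almost exactly; your extra remark bounding the witness length by $t$ (since $f$ can read at most $t$ symbols in $t$ steps) is a nice detail the paper leaves implicit.

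For hardness, you take a genuinely different route. You reduce from SAT, encoding a formula $\varphi$ as a verifier that evaluates it on a boolean assignment---this is exactly the informal sketch the paper gives in the paragraph \emph{preceding} the observation. The paper's formal proof, however, bypasses any specific $\NP$-complete problem: it observes that \emph{every} language in $\NP$ is already a puzzle, since an $\NP$ verifier $T(x,y)$ running in time $p(|x|)$ is itself a puzzle instance $(f=T,\ x,\ t=p(|x|))$ with the input and witness roles swapped. This is a one-line ``generic reduction'' that avoids invoking Cook--Levin altogether. Your SAT reduction is perfectly valid and arguably more concrete (and it lets you point to the explicit Python encoding), but the paper's argument is shorter and makes clearer that the puzzle decision problem is not merely $\NP$-hard but is essentially the \emph{canonical} $\NP$ problem by definition.
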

\begin{proof}
One can test whether a given puzzle string $f$ encoding a Turing machine halts on a witness $y$ in time $\leq t$ by simulating running $f$ on $(y, x)$ for $t$ steps. Since simulating a Turing machine of size $|f|$ running for $t$ steps can be done in $\poly(|f|, t)$ time, this can be done in time $\poly(|f|, t)$ as required for $\NP$.

To see that the problem is complete, note that given any other NP problem defined by a Turing machine $T(x, y)$ that runs on input $x\in \Sigma^*$ and witness $y \in \Sigma^*$ in polynomial time $t = p(|x|)$ is a type of puzzle itself for $f=T$ (with inputs swapped).
\end{proof}
\section{Open problems}\label{ap:open}
The following five puzzles would each represent a major breakthrough in computer science or mathematics if solved. 
\begin{enumerate}
    \item \href{https://github.com/microsoft/PythonProgrammingPuzzles/blob/main/problems/README.md#factoring}{Factoring}. In the traditional version of this ancient problem, the goal is to efficiently find the prime factorization of a given integer. In the puzzle version, we state the equivalent problem of finding any non-trivial factor of a given integer. The puzzle is equivalent in the sense that one can recursively call the puzzle on each of the factors found until one achieves the complete prime factorization. A number of factoring algorithms have been developed over decades that factor larger and larger numbers. The RSA Factoring Challenge \citep[see, e.g.,][]{Kaliski2005} has awarded tens of thousands of dollars in prize money and RSA offered \$200,000 for factoring the largest RSA challenge number with 617 digits. The closely related \href{https://github.com/microsoft/PythonProgrammingPuzzles/blob/main/problems/README.md#discretelog}{Discrete Log} problem is also unsolved.
    
    \item \href{https://github.com/microsoft/PythonProgrammingPuzzles/blob/main/problems/README.md#GraphIsomorphism}{Graph Isomorphism}. Given two isomorphic graphs, find the bijection that relates the two of them. In a breakthrough, Babai has claimed a quasi-polynomial time for this problem, but no polynomial time algorithm is known.
    
    \item \href{https://github.com/microsoft/PythonProgrammingPuzzles/blob/main/problems/README.md#plantedclique}{Planted Clique}. In this classic graph-theory problem, an $n$-node Erdős–Rényi random graph random graph is chosen and then $k$ nodes are selected at random and the edges are added so that they form a clique. The problem is to find the clique. It is not known whether there is a polynomial-time algorithm for this problem \citep[see, e.g.,][]{Arora2009ComputationalCA}.

    \item \href{https://github.com/microsoft/PythonProgrammingPuzzles/blob/main/problems/README.md#learnparitywithnoise}{Learning Parity with Noise}. This is a binary classification problem in computational learning theory. Roughly speaking, the problem is to efficiently learn a parity function with random classification noise. The fastest known algorithm for this problem runs in time $\tilde{O}(2^{n/\log n})$ \citep{blum2003}. The problem is also closely related to efficiently decoding random linear codes \citep{berlekamp1978} and various assumptions in cryptography. Note that some of the instances of this problem are small (and thus easy) while others are quite large.

    \item \href{https://github.com/microsoft/PythonProgrammingPuzzles/blob/main/problems/README.md#collatzcycleunsolved}{Collatz cycle}. The problem is to find a cycle in the famous $3n+1$ process, where you start with integer $n>0$ and repeatedly set $n$ to $n/2$ if $n$ is even, otherwise $3n+1$, until you reach 1. The Collatz cycle conjecture is that there are no cycles in this process. According to the \href{https://en.wikipedia.org/wiki/Collatz_conjecture}{Wikipedia article} on the topic, Jeffrey Lagarias stated that it ``is an extraordinarily difficult problem, completely out of reach of present day mathematics'' and Paul Erdős said ``Mathematics may not be ready for such problems.'' He also offered \$500 for its solution. 
    
\end{enumerate}
Each of these problems is described by a short (1-5 line) python function. Now, for the algorithms problems 1-3, the puzzle involves solving given instances and not exactly with the open problem: coming up with a provably polynomial-time algorithm, and it is entirely possible that no poly-time algorithm exists. However, these are all problems that have been intensely studied and an improvement, even a practical one, would be a breakthrough. For the Collatz cycle, if the Collatz conjecture holds then there is no cycle. However, we give problems involving finding integers with large Collatz delays which could be used to, at least, break records. 
Also noteworthy but perhaps not as well-known is \href{https://github.com/microsoft/PythonProgrammingPuzzles/blob/main/problems/README.md#conway99}{Conway's 99 puzzle}, an unsolved problem in graph theory due to Conway and \cite{biggs1971finite} (as cited by Wikipedia). The two-line puzzle describes finding an undirected graph with 99 vertices, in which each two adjacent vertices have exactly one common neighbor, and in which each two non-adjacent vertices have exactly two common neighbors. \citet{Conway99} offered \$1,000 for its solution. 

There are also several unsolved puzzles in terms of beating records, e.g., finding oscillators or spaceships of certain periods in Conway's game of life and finding uncrossed knights tours on chess boards of various sizes.

\section{Comparing puzzles to competitive-programming problems}\label{ap:compare}

Figure \ref{fig:codeforecompare} 
illustrates an elementary \url{codeforces.com} problem. As is typical in programming competitions, the authors have concocted an entertaining story to motivate the problem. \citet{dagiene2008bebras} include ``should be funny''  and ``should have pictures'' among desirable criteria for competitive programming problems. Also, as is typical the first step is explaining how the input is formatted and how the output should be formatted. One difficulty in authoring such competitive-programming challenges is ensuring that the English description unambiguously matches with the hidden test cases. The \href{https://icpc.global/worldfinals/rules}{ICPC rules} state: ``A contestant may submit a claim of ambiguity or error in a problem statement by submitting a clarification request.  If the judges agree that an ambiguity or error exists, a clarification will be issued to all contestants.'' With puzzles, this is not necessary---a mistake in a puzzle either means that the puzzle is unsolvable or that the puzzle has an unexpected (often trivial) solution, neither of which cause major problems as it would still be a fair comparison of different solvers.

The puzzle form \href{https://github.com/microsoft/PythonProgrammingPuzzles/blob/main/problems/README.md#invertpermutation}{InvertPermutation}\footnote{In P3, we have slightly modified the problem so that it is only inspired by the codeforces problem and not a direct translation. The P3 problem is harder in that characters not in the permutation may also appear in the string unmodified.} has no story, no description of input/output format, and no examples. The input/output formatting is taken care of simply by the type hints. 

The intention is for puzzles to isolate the essence of the part of the problem that involves reasoning. Other datasets already address natural language understanding and input/output string formatting.

\begin{figure}[ht]

\begin{mdframed}
{\large \textbf{Codeforces problem 474 A. Keyboard}}\hrule
\smallskip

Our good friend Mole is trying to code a big message. He is typing on an unusual keyboard with characters arranged in following way:\\

qwertyuiop\\
asdfghjkl;\\
zxcvbnm,./\\

Unfortunately Mole is blind, so sometimes it is problem for him to put his hands accurately. He accidentally moved both his hands with one position to the left or to the right. That means that now he presses not a button he wants, but one neighboring button (left or right, as specified in input).

We have a sequence of characters he has typed and we want to find the original message.\\

\textbf{Input}\\
First line of the input contains one letter describing direction of shifting ('L' or 'R' respectively for left or right).\\

Second line contains a sequence of characters written by Mole. The size of this sequence will be no more than 100. Sequence contains only symbols that appear on Mole's keyboard. It doesn't contain spaces as there is no space on Mole's keyboard.\\

It is guaranteed that even though Mole hands are moved, he is still pressing buttons on keyboard and not hitting outside it.\\

\textbf{Output}\\
Print a line that contains the original message.\\

{\large \textbf{Examples}}\\\\
\textbf{input}\\
R\\
s;;upimrrfod;pbr\\

\textbf{output}\\
allyouneedislove\\
\end{mdframed}
\begin{pysmall}
def f(s: str, perm="qwertyuiopasdfghjkl;zxcvbnm,./", target="s;;upimrrfod;pbr"):
    return "".join(perm[perm.index(c) + 1] for c in s) == target
\end{pysmall}
    \caption{Example of an introductory competition problem \url{https://codeforces.com/problemset/problem/474/A} (top) and the respective puzzle version (bottom) that is only using code and is short to read. In this problem, there is a given permutation of characters $\pi$, and a given target string $t$, and one wants to find a source string $s$ such that when each character of $s$ has been permuted with $\pi$, the target is achieved. The puzzle has been simplified to always shift right.}
    \label{fig:codeforecompare}
\end{figure}

\section{User Study Details}\label{sec:study_details}
The user study began with a short tutorial about puzzles, which included the puzzles shown in Figure \ref{fig:gpt_prompt_med}. The 30 puzzles (see Figures \ref{fig:study1}-\ref{fig:study2}) were divided into three parts of 10 puzzles each: numbers 1-10, 11-20, and 20-30. Since each puzzle took at maximum of 6 minutes, no part took more than one hour. In the internal IRB approval (July 22, 2020), the key discussion points were that we would not collect age, gender or any other PII since it was not relevant to our study.

\subsection{Provided instructions}


Figures~\ref{fig:hack_1}-\ref{fig:hack_3} present the initial instructions that participants were given before starting the study. Figures~\ref{fig:hack_nb_1}-\ref{fig:hack_nb_2} show the interface that they used for retrieving puzzles and submitting solutions. We run implement a Python backend to store progress logs and to serve each puzzle in its turn, so participants won't accidentally be exposed to any of the puzzles in advance. We asked participants to follow the simple interface and not to attempt any sophisticated hacking techniques that will give them any personal advantage. We did not observe any such malicious behaviour and received positive feedback for the stability and clarity of the interface.

\subsection{Qualitative feedback.} 
Our Jupyter notebook interface also allowed users to submit qualitative feedback. As an example of this last point, participants mentioned that they were not familiar with functions such as \pyline{zip} or \pyline{all} but learned them in the course of the study.
Overall, Three themes emerged in the feedback: participants enjoyed solving the puzzles, they felt that 6 minutes was not enough time to solve the puzzles, and they felt they learned Python from doing the puzzles.

\subsection{Results summary}
A total of 21 participants completed the user study. Participants solved between 12-30 puzzles, with 6 participants solving more than 28 puzzles, and only a single participant solving all 30. As Figure~\ref{fig:hack_yoe} shows, the participants Python experience ranged between a few months to 8 years, with a median of 3 years. For post study analysis purposes, we denote participants with less than 3 years of experience as \emph{beginners} and the rest as \emph{experienced}. Figure~\ref{fig:user_study_solved} shows the number of participants that solved each puzzle, grouped by experience. 9 of the puzzles were solved by all beginners, whereas 17 puzzles were solved by all experienced. This positive correlation between the number of programming experience and number of puzzles solved, indicates the effectiveness of our puzzles as a proxy to evaluating programming proficiency.

We also notice that experienced programmers solve puzzles faster (149 seconds per puzzle on average, compared to 194 seconds for beginners). Figure~\ref{fig:user_study} shows the distribution of time spent by participants on each puzzle. We use the per puzzle average solving time as an indicator to its perceived difficulty. As discussed in the main paper (\S\ref{sec:study_res}), we see a strong correlation between the perceived difficulty of different puzzles for humans and for our examined AI solvers. 

\begin{figure}[t]
    \centering
    \begin{mdframed}
    \includegraphics[width=1\textwidth]{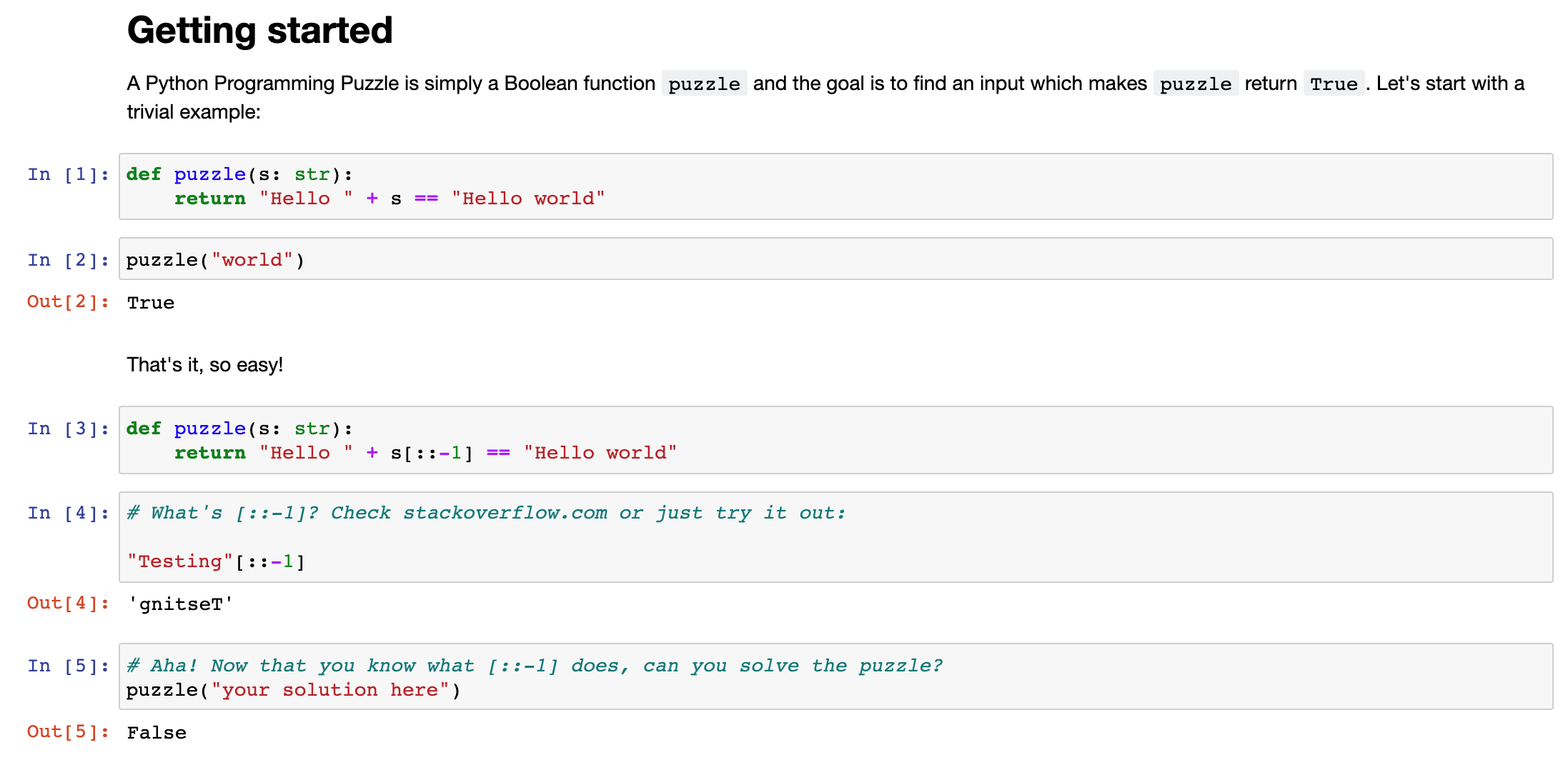}
    \end{mdframed}
    \caption{Instructions page provided to the study participants as a Jupyter notebook (part 1).}
    \label{fig:hack_1}
\end{figure}

\begin{figure}[t]
    \centering
    \begin{mdframed}
    \includegraphics[width=1\textwidth]{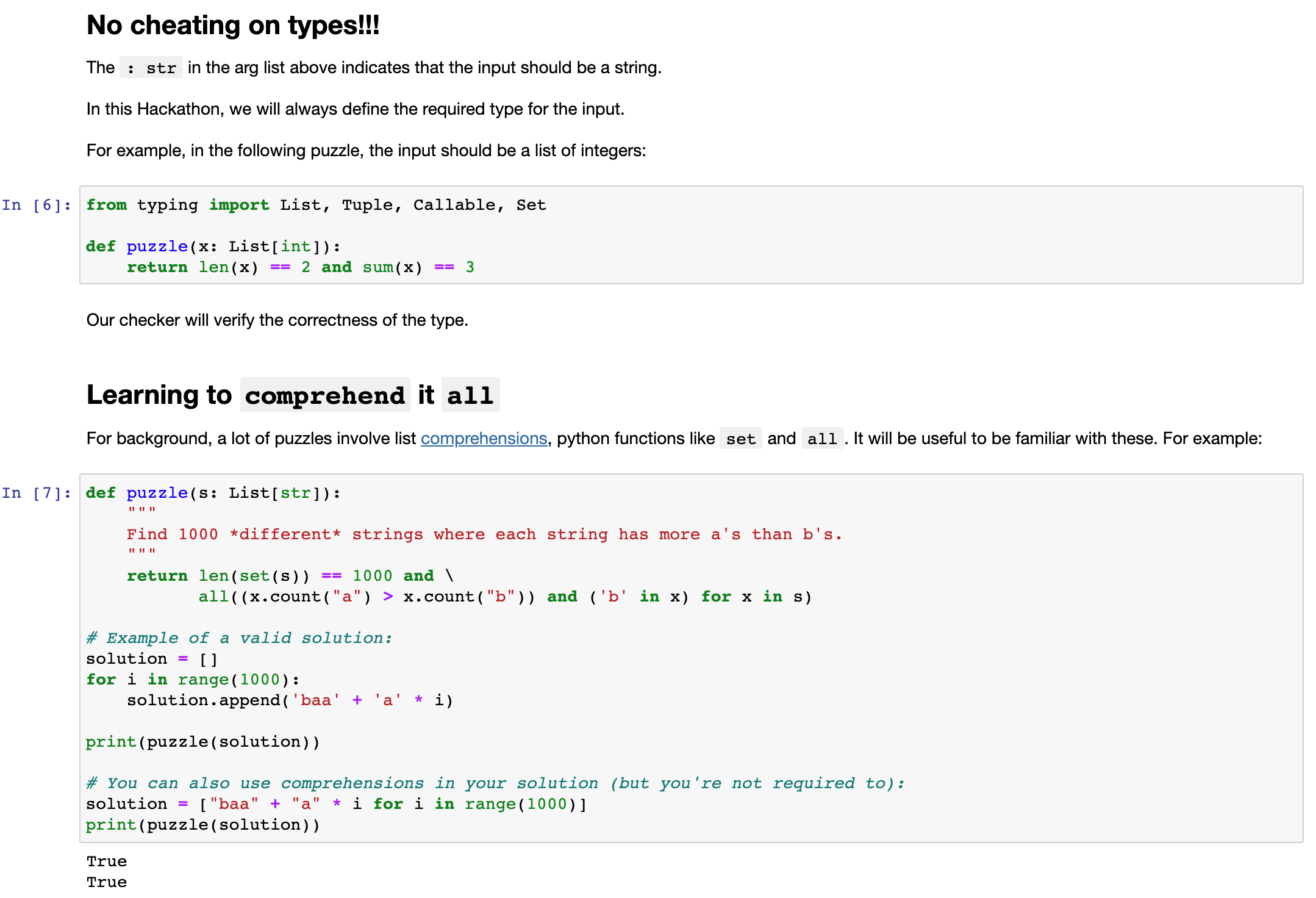}
    \end{mdframed}
    \caption{Instructions page provided to the study participants as a Jupyter notebook (part 2).}
    \label{fig:hack_2}
\end{figure}

\begin{figure}[t]
    \centering
    \begin{mdframed}
    \includegraphics[width=1\textwidth]{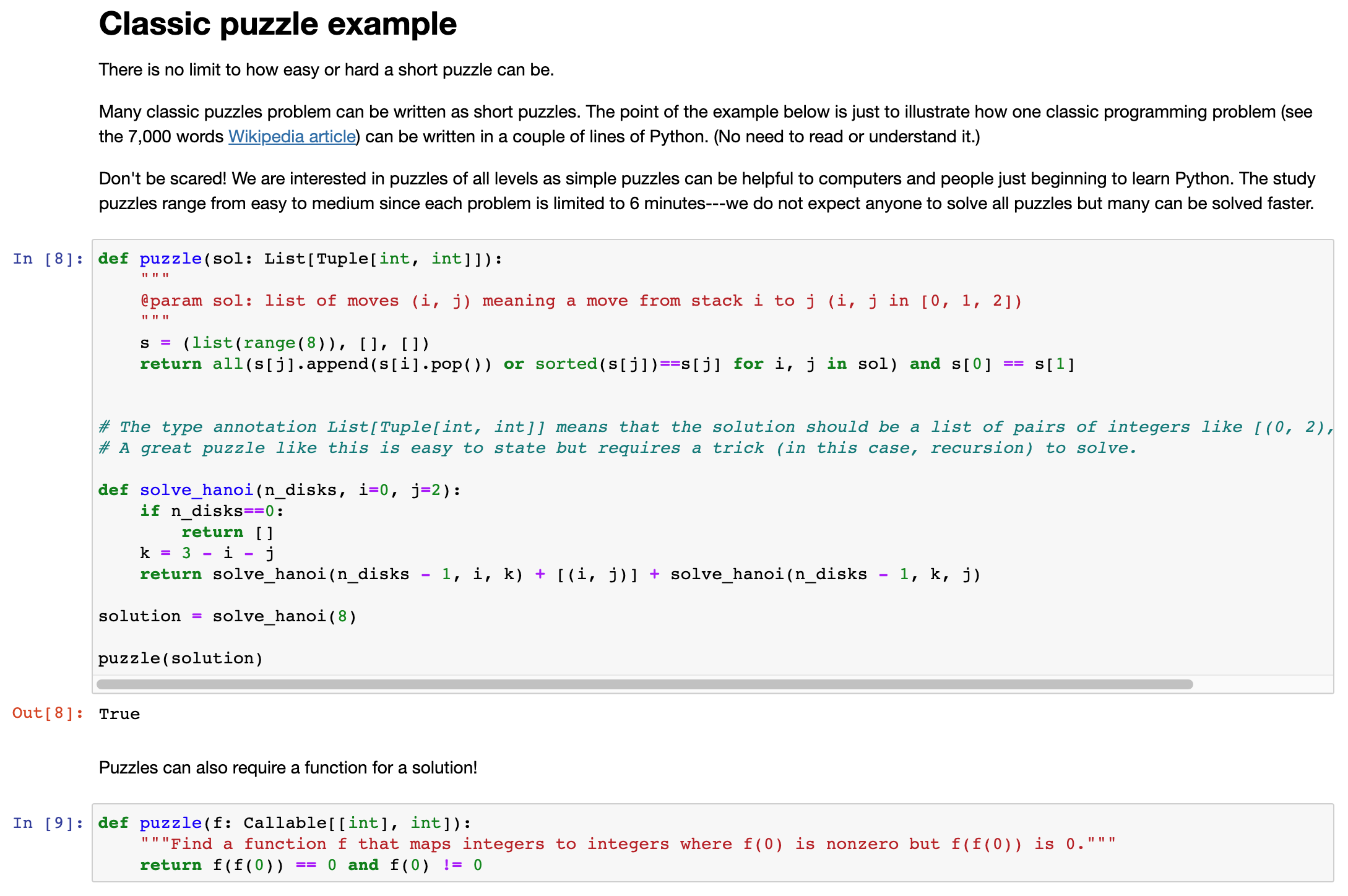}
    \end{mdframed}
    \caption{Instructions page provided to the study participants as a Jupyter notebook (part 3).}
    \label{fig:hack_3}
\end{figure}

\begin{figure}[t]
    \centering
    \begin{mdframed}
    \includegraphics[width=1\textwidth]{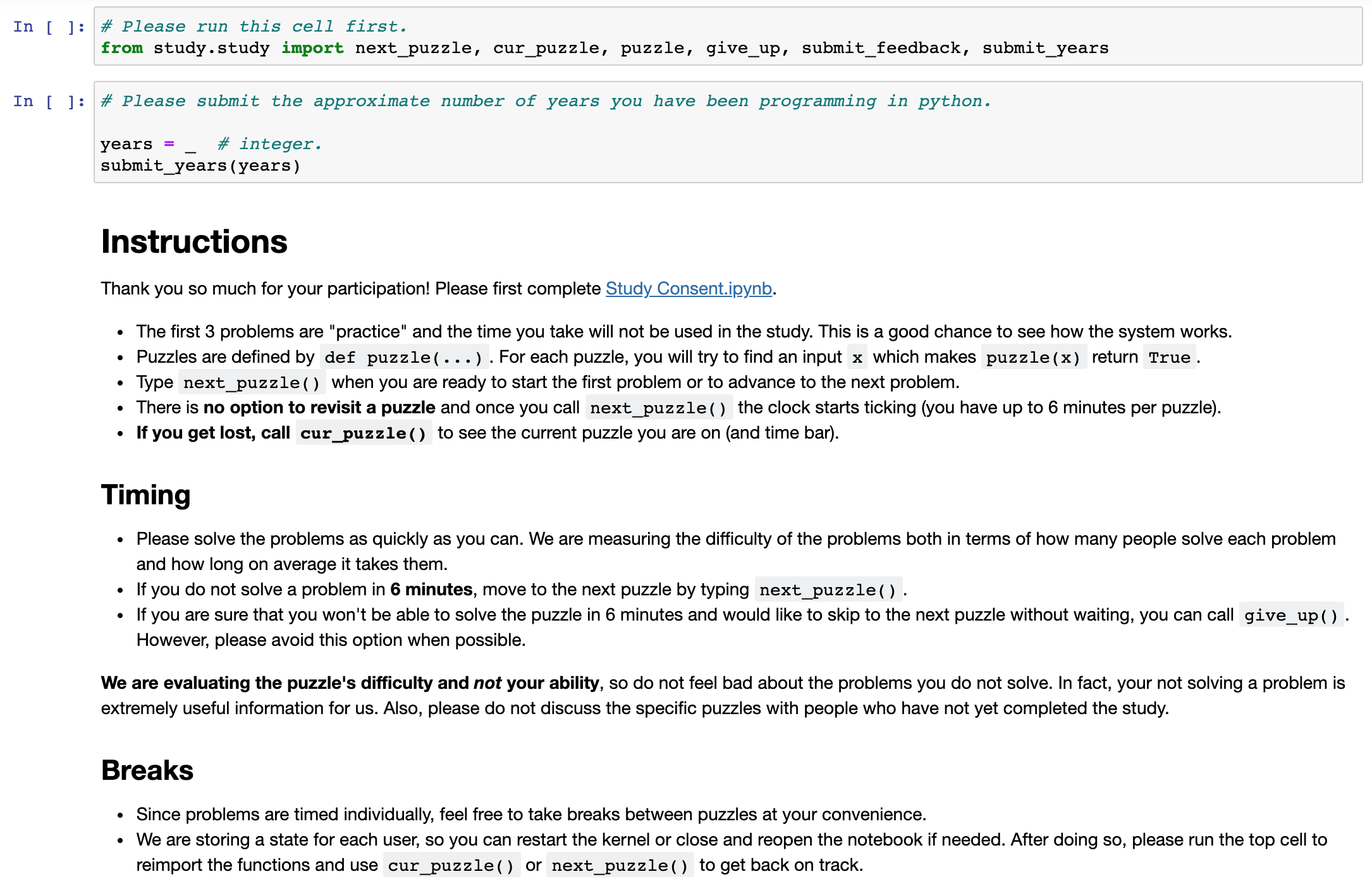}
    \end{mdframed}
    \caption{The introduction of the study notebook given to participants.}
    \label{fig:hack_nb_1}
\end{figure}

\begin{figure}[t]
\small
    \centering
    \begin{subfigure}[b]{1\textwidth}
    \begin{mdframed}
    \includegraphics[width=1\textwidth]{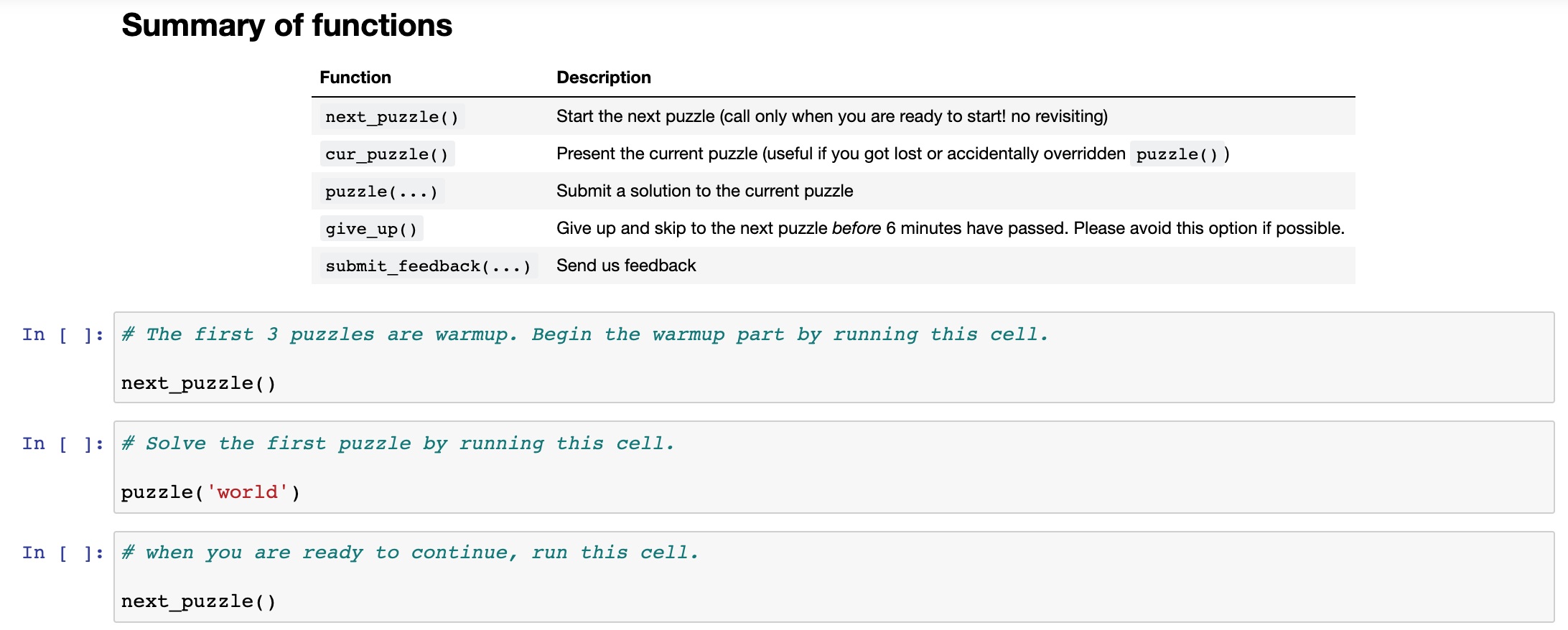}
    \end{mdframed}
    \caption{Initial view.}
    \end{subfigure}
    
    \begin{subfigure}[b]{1\textwidth}
    \includegraphics[width=1\textwidth]{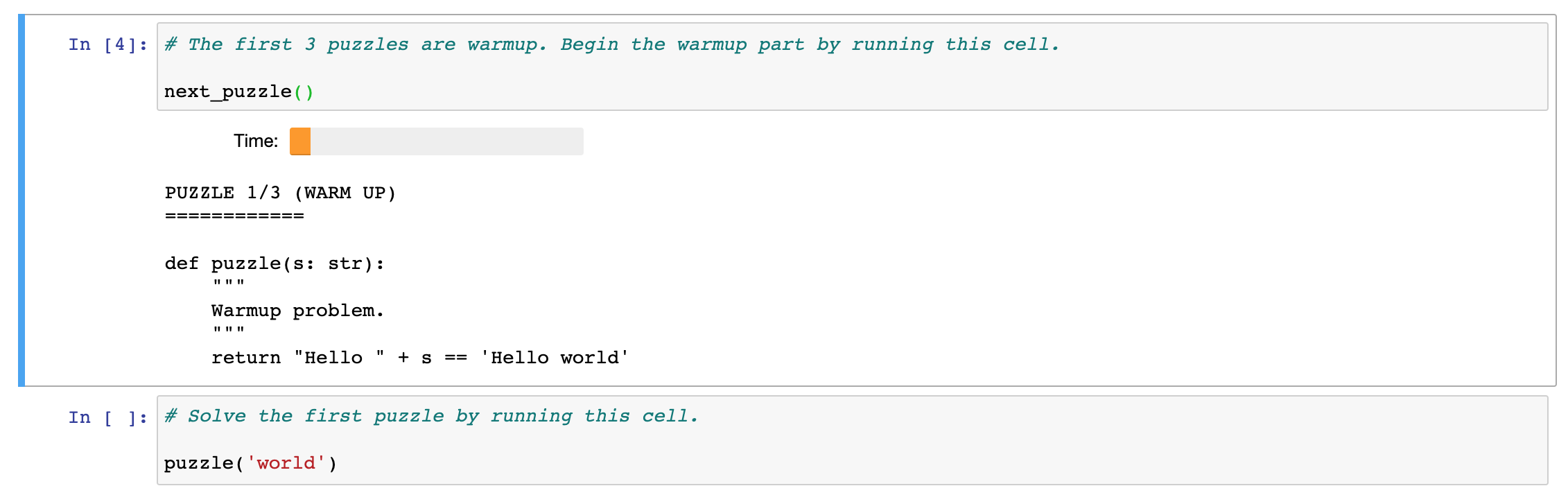}
    \caption{View while solving a puzzle. The progress bar advances towards the 6 minutes limit.}
    \end{subfigure}
    
        \begin{subfigure}[b]{1\textwidth}
    \includegraphics[width=1\textwidth]{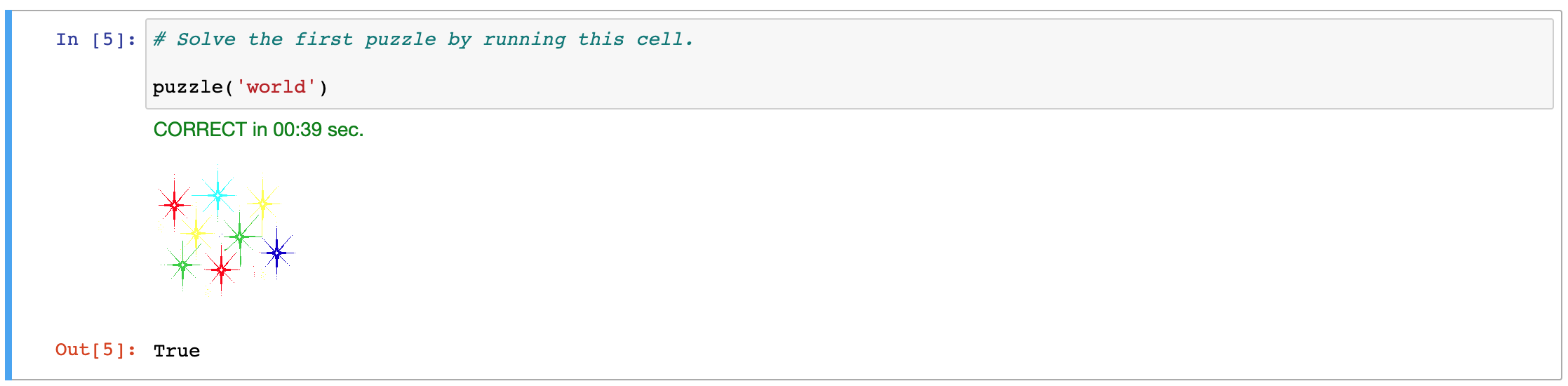}
    \caption{View after submitting a successful solution.}
    \end{subfigure}
    
            \begin{subfigure}[b]{1\textwidth}
            \begin{mdframed}
    \includegraphics[width=1\textwidth]{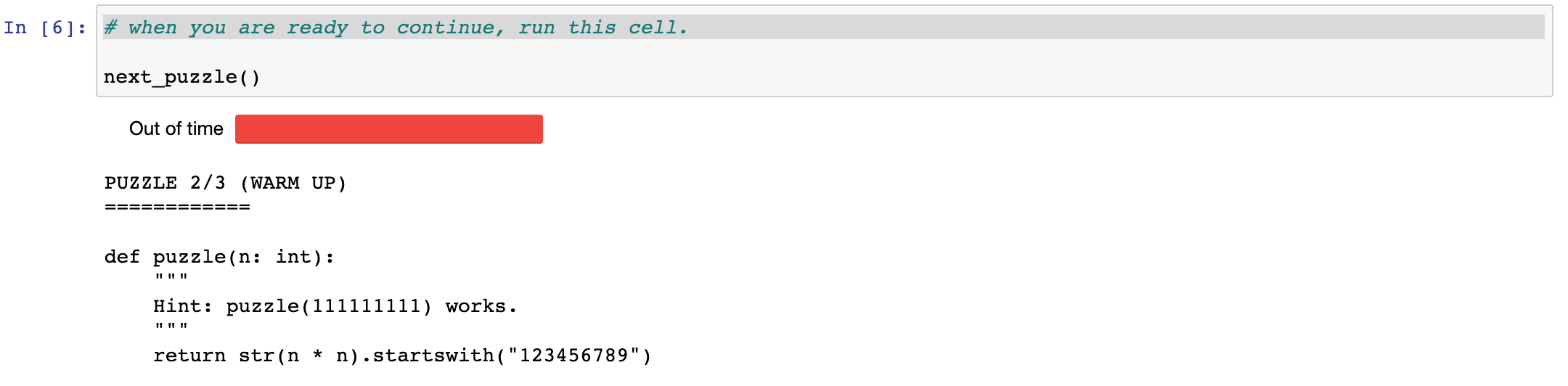}
    \end{mdframed}
    \caption{View after 6 minutes have passed since viewing the puzzle without submitting a valid solution.}
    \end{subfigure}
    
    \begin{subfigure}[b]{1\textwidth}
    \begin{mdframed}
    \includegraphics[width=1\textwidth]{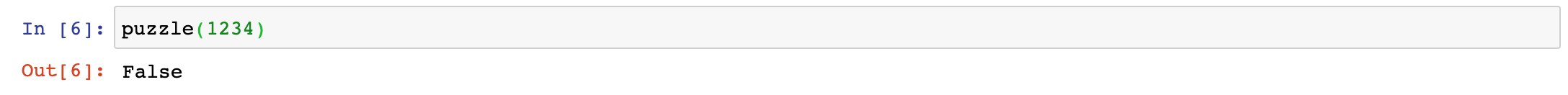}
    \end{mdframed}
    \caption{View when submitting a wrong solution to a puzzle (before timeout is reached).}
    \end{subfigure}
    
    \caption{The interface used by participants to solve puzzles during the study. Each sub-figure shows a different state of the notebook according to the user's interaction.}
    \label{fig:hack_nb_2}
\end{figure}

    

\begin{figure}
\begin{pysmall}
def f1(s: str):
    return s.count("o") == 1000 and s.count("oo") == 100 and s.count("ho") == 801

def f2(s: str):
    return s.count("o") == 1000 and s.count("oo") == 0

def f3(x: List[int]):
    return sorted(x) == list(range(999)) and all(x[i] != i for i in range(len(x)))

def f4(x: List[int]):
    return len(x) == 10 and x.count(x[3]) == 2

def f5(x: List[int]):
    return all([x.count(i) == i for i in range(10)]) 

def f6(n: int):
    return n 

def f7(s: str):
    return str(8**2888).count(s) > 8 and len(s) == 3

def f8(s: List[str]):
    return s[1234] in s[1235] and s[1234] != s[1235]

def f9(x: List[int]):
    return ["The quick brown fox jumps over the lazy dog"[i] for i in x] \
            == list("The five boxing wizards jump quickly")

def f10(s: str):
     return s in str(8**1818) and s==s[::-1] and len(s)>11
     
def f11(x: List[str]):
    return min(x) == max(x) == str(len(x))

def f12(x: List[int]):
    return all(a + b == 9 for a, b in zip([4] + x, x)) and len(x) == 1000

def f13(x: float):
    return str(x - 3.1415).startswith("123.456")

def f14(x: List[int]):
   return all([sum(x[:i]) == i for i in range(20)])

def f15(x: List[int]):
   return all(sum(x[:i]) == 2 ** i - 1 for i in range(20)) 
\end{pysmall}
    \caption{The first 15 puzzles in the user study.}
    \label{fig:study1}
\end{figure}
\begin{figure}
\begin{pysmall}
def f16(x: str):
    return float(x) + len(x) == 4.5

def f17(n: int):
    return len(str(n + 1000)) > len(str(n + 1001)) 

def f18(x: List[str]): 
    return [s + t for s in x for t in x if s!=t] == 'berlin berger linber linger gerber gerlin'.split()

def f19(x: Set[int]):
    return {i+j for i in x for j in x} == {0, 1, 2, 3, 4, 5, 6, 17, 18, 19, 20, 34}

def f20(x: List[int]):    
    return all(b in {a-1, a+1, 3*a} for a, b in zip([0] + x, x + [128]))

def f21(x: List[int]):
   return all([x[i] != x[i + 1] for i in range(10)]) and len(set(x)) == 3

def f22(x: str):
    return x[::2] in x and len(set(x)) == 5

def f23(x: List[str]):
    return tuple(x) in zip('dee', 'doo', 'dah!')

def f24(x: List[int]):
    return x.count(17) == 3 and x.count(3) >= 2

def f25(s: str):
    return sorted(s)==sorted('Permute me true') and s==s[::-1]

def f26(x: List[str]):
   return "".join(x) == str(8**88) and all(len(s)==8 for s in x)

def f27(x: List[int]):
   return x[x[0]] != x[x[1]] and x[x[x[0]]] == x[x[x[1]]]

def f28(x: Set[int]):
   return all(i in range(1000) and abs(i-j) >= 10 for i in x for j in x if i != j) \
          and len(x)==100  

def f29(x: Set[int]):
   return all(i in range(1000) and abs(i*i - j*j) >= 10 for i in x for j in x if i != j) and len(x) > 995

def f30(x: List[int]):
    return all([123*x[i] 
                for i in range(20)])
\end{pysmall}
    \caption{The last 15 puzzles in the user study.}
    \label{fig:study2}
\end{figure}

\begin{figure}[ht]
    \centering
    \begin{subfigure}[b]{0.45\textwidth}
    \includegraphics[width=1\textwidth]{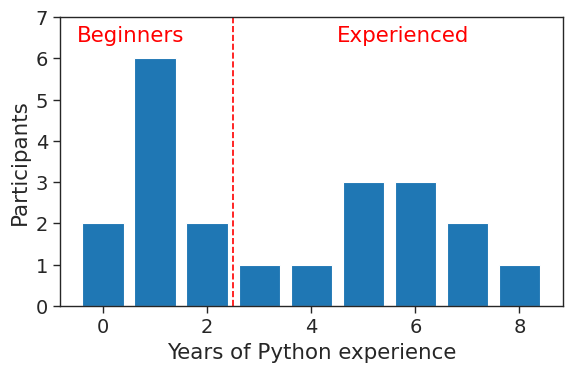}
    \end{subfigure}
    \hfill
    \begin{subfigure}[b]{0.45\textwidth}
    \includegraphics[width=1\textwidth]{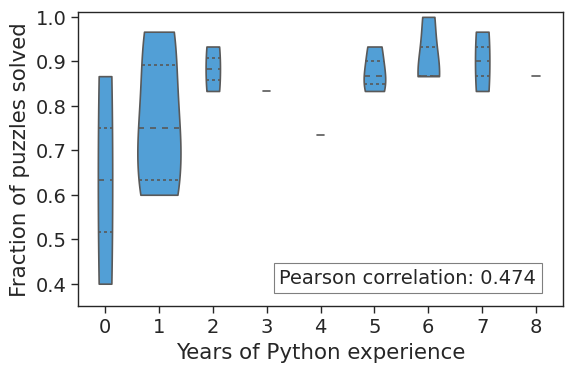}
    \end{subfigure}
    \caption{Years of Python programming experience distribution of our study participants. For post study analysis purposes, we split the group by the median (3 years) to beginners and experienced programmers. The right violin plot shows the fraction of puzzles solved by participants with different years of experience. The lines in the violin show the four quartiles.}
    
    \label{fig:hack_yoe}
\end{figure}

\begin{figure}[t]
    \centering
    \includegraphics[width=1\textwidth]{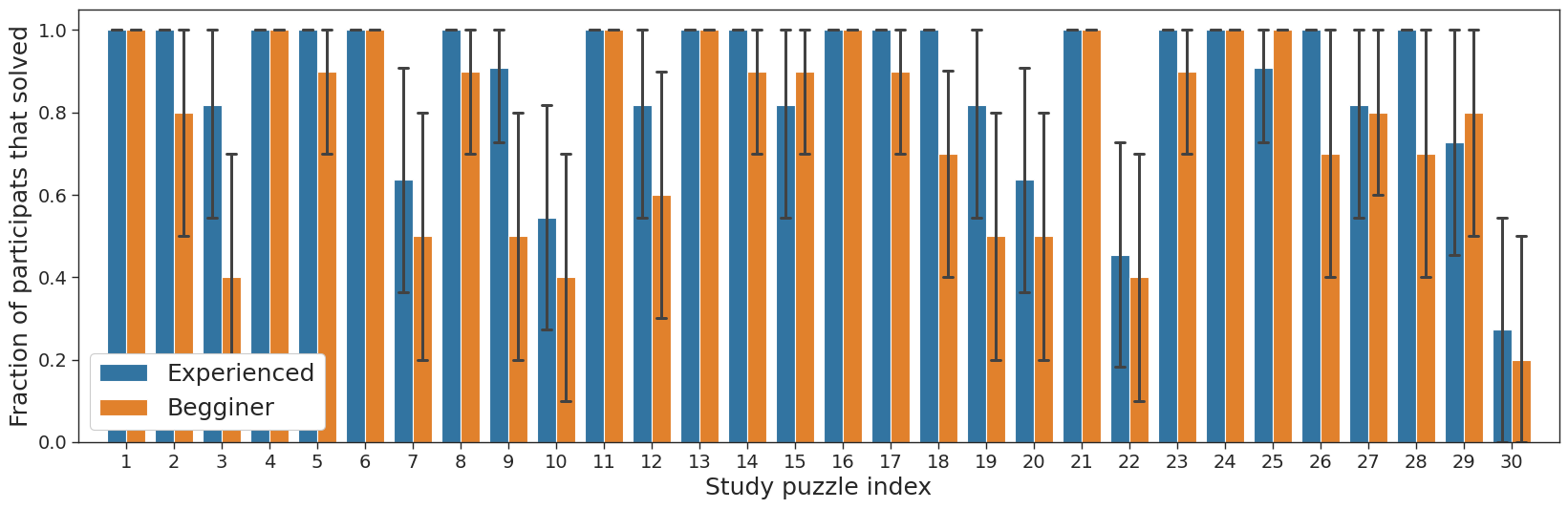}
    \caption{Fraction of participants, divided to experienced and beginners, that solved each of the 30 puzzles in less than 6 minutes.}
    \label{fig:user_study_solved}
\end{figure}

\begin{figure}[t]
    \centering
    \includegraphics[width=1\textwidth]{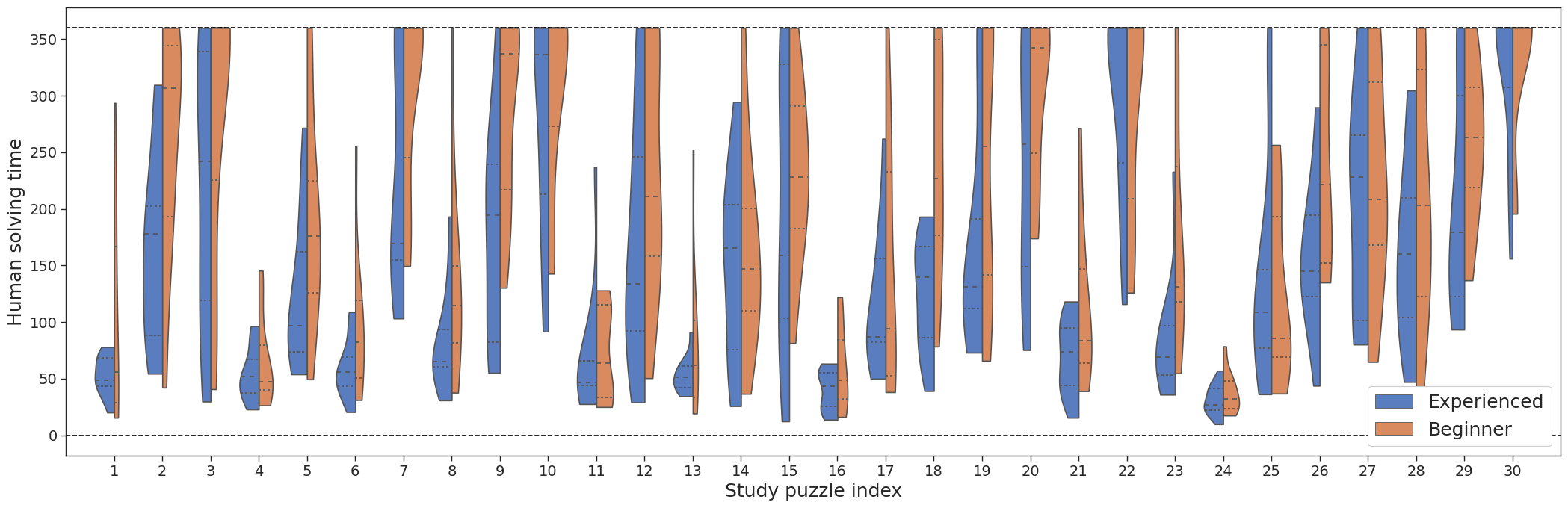}
    \caption{The time that participants spent on each study puzzle, up to 360 seconds per puzzle. For unsolved puzzles, we count the time as using the full 6 minutes. The orange colored areas on the right show the time distribution for beginner Python coders with less than three years experience (10 out of 21). The blue colored areas on the left present the solving time distribution per puzzle for users with at least three years of experience in Python.}
    \label{fig:user_study}
\end{figure}

\section{Solution to Tower of Hanoi}\label{sec:hanoi}
Codex's solution to the Tower of Hanoi puzzle is shown in Figure \ref{fig:hanoi}. Even though the puzzle did not mention the word Hanoi, Codex's solution clearly knew the reference, in fact offering a link to the Wikipedia page. The first part of the URL is correct, but there is no ``Advanced computer algorithm'' section on the page, so the link simply resolves to the Wikipedia page. The Python code on the Wikipedia page is only similar in spirit, in part because the way the puzzle asks for the moves is somewhat different from the Wikipedia page. This is a difficult puzzle for which solutions are found at a rate of approximately 0.03\%. Surprisingly, Codex was not able to solve the puzzle when we renamed the variable \pyline{num_disks} to \pyline{n} and removed the string \pyline{"bigger disk on top"}, possibly because the association with Tower of Hanoi was weaker.

\begin{figure}
The prompt was the medium Codex prompt (without English descriptions):
\begin{pyblock}
from typing import List

def f1(s: str):
    return "Hello " + s == "Hello world"

def g1():
    return "world"

assert f1(g1())

def f2(s: str):
    return "Hello " + s[::-1] == "Hello world"

def g2():
    return "world"[::-1]

assert f2(g2())

# skipping two puzzles, see Figure \ref{fig:codex_prompt_med}

assert f5(g5())

def f6(moves: List[List[int]], num_disks=8):
    state = [1] * num_disks                 
    for [i, j] in moves:  
        assert state.index(i) <= (state + [1, 2, 3]).index(j), "bigger disk on top"
        state[state.index(i)] = j
    return state == [3] * num_disks

def g6(num_disks=8):
\end{pyblock}
And a valid completion from Codex, the smaller cushman-codex engine, was:
\begin{pyblock}
    # Algorithm is equivalent to moving all disks.
    # From https://en.wikipedia.org/wiki/Tower\_of\_Hanoi\#Advanced\_computer\_algorithm
    def hanoi(n, p, q, r):
        if n > 0:
            hanoi(n - 1, p, r, q)
            moves.append([p, r])
            hanoi(n - 1, q, p, r)
    moves = []
    hanoi(num_disks, 1, 2, 3)
    assert f6(moves, num_disks)
    return moves
\end{pyblock}

\caption{\label{fig:hanoi}
A solution to the Tower of Hanoi puzzle found by the small Codex. The Wikipedia page link generated by the model indeed contains (though not in the \textit{Advanced computer algorithm} section) a solution to a slightly different formulation of this puzzle, see Figure~\ref{fig:hanoi_wiki}. Note that the medium prompt doesn't mention the name of the puzzle. Codex made the correct association, and adjusted the solution code to the state-based representation of this puzzle as given in \pyline{f6}. Interestingly, replacing the use of \emph{disks} in the puzzle's variable names with other non-descriptive options seems to prevent Codex from solving this puzzle.}
\end{figure}

\begin{figure}
\begin{pyblock}
A = [3, 2, 1]
B = []
C = []

def move(n, source, target, auxiliary):
    if n > 0:
        # Move n - 1 disks from source to auxiliary, so they are out of the way
        move(n - 1, source, auxiliary, target)

        # Move the nth disk from source to target
        target.append(source.pop())

        # Display our progress
        print(A, B, C, '##############', sep='\n')

        # Move the n - 1 disks that we left on auxiliary onto target
        move(n - 1, auxiliary, target, source)

# Initiate call from source A to target C with auxiliary B
move(3, A, C, B)

\end{pyblock}

\caption{\label{fig:hanoi_wiki}
The algorithm from \url{https://en.wikipedia.org/wiki/Tower_of_Hanoi\#Recursive_implementation} (November, 2021) that solves Tower of Hanoi for a representation in which the three towers are lists with disk indices.}
\end{figure}
\end{document}